\newtheorem{proposition}{Proposition}
\newcommand{\Input}{\mathrm{Input}}
\pgfplotsset{compat=1.18}
\begin{document}

%
\runningtitle{Lag Operator SSMs: A Geometric Framework for Structured State Space Modeling}

%

\twocolumn[

\aistatstitle{
Lag Operator SSMs: A Geometric Framework \\
for Structured State Space Modeling
}

\aistatsauthor{ Sutashu Tomonaga \And Kenji Doya \And  Noboru Murata }

\aistatsaddress{ Okinawa Institute of \\Science and Technology
\And  Okinawa Institute of \\Science and Technology \And Waseda University } ]

\begin{abstract}
Structured State Space Models (SSMs), which are at the heart of the recently popular Mamba architecture, are powerful tools for sequence modeling. However, their theoretical foundation relies on a complex, multi-stage process of continuous-time modeling and subsequent discretization, which can obscure intuition. We introduce a direct, first-principles framework for constructing discrete-time SSMs that is both flexible and modular. Our approach is based on a novel lag operator, which geometrically derives the discrete-time recurrence by measuring how the system's basis functions undergo what we call a \textit{domain expansion} from one timestep to the next. The resulting state matrices are computed via a single inner product involving this operator, enabling a modular design space for creating novel SSMs by flexibly combining different basis functions and time-warping schemes. To validate our framework, we demonstrate that a specific instance exactly recovers the recurrence of the influential HiPPO model. Numerical simulations confirm our derivation, providing new theoretical tools for designing flexible and robust sequence models.

\end{abstract}
\section{INTRODUCTION}
Structured State Space Models (SSMs), which form the backbone of modern architectures like Mamba, have emerged as a powerful architecture for long-range sequence modeling \citep{Gu2023-ut}. Their key innovation is imposing mathematical structure on traditional SSMs to enable effective dynamical memory compression. They achieve performance that rivals Transformers on many tasks while breaking free from the quadratic scaling bottleneck of attention mechanisms \citep{Gu2021-be, Somvanshi2025-or}.

As many real-world phenomena are inherently dynamical systems, SSMs provide a natural choice for learning a latent representation of a signal's history for downstream tasks like prediction or classification. Unlike attention-based models with direct access to the entire input sequence, SSMs operate recurrently and must compress the past into a finite-dimensional state. The efficacy of an SSM, therefore, hinges on the quality of this compression mechanism. In Structured SSMs, this mathematical structure arises from rigorous theorems of continuous-time memory updates, where the SSM's state is structured as coefficients of a signal's history projected onto an orthogonal basis whose domain expands over time. However, the theoretical foundation that enabled this resurgence, most notably the High-order Polynomial Projection Operators (HiPPO) framework \citep{Gu2020-de}, relies on an intricate multi-stage derivation involving continuous-time ODEs and subsequent discretization. This complexity can obscure the geometric intuition behind the state update, making principled extensions difficult.

In this paper, we introduce a novel framework for constructing discrete-time Structured SSMs based on discrete-time geometric principles. The hallmark of our framework is the \textbf{lag operator} ($\sigma_{t} \circ \sigma_{t+1}^{-1}$), a geometric mapping that quantifies the \textit{domain expansion} of the structured projection basis between timesteps. This operator derives the complete discrete recurrence ($\bm{A}_t, \bm{B}_t$) directly from a single inner product. This provides a complete, \textbf{end-to-end discrete-time construction}, as it entirely bypasses the intermediate ODE and discretization steps of traditional methods. A key advantage of this direct construction is its modularity: by defining the time-warping function $\sigma_t$, one can construct varied SSMs with different memory properties, such as models with multi-resolution memory to track long-term trends and fine-grained recent details simultaneously.

To validate our framework, we show theoretically that an exponential warping instance exactly recovers the original HiPPO recurrence (via Propositions 1 and 2), providing a more fundamental geometric foundation for HiPPO itself. Numerical simulations confirm this with (1) precise matrix equivalence to HiPPO counterparts and (2) replication of HiPPO's memory dynamics with our end-to-end recurrence.
\section{CONTINUOUS-TIME ORIGINS OF STRUCTURED SSMS}
The theoretical underpinnings of structured SSMs like Mamba trace back to the Legendre Memory Unit (LMU) \citep{Voelker2019-ir, Voelker2018-be}, which sought to enable continuous memory storage by approximating a continuous-time delay line $u(t-\theta)$ for a given input signal $u(t)$.
Since the delay operator's transfer function in the Laplace domain, $e^{-\theta s}$, is irrational, it cannot be directly realized by a finite-dimensional state-space system. The LMU's key insight was to first approximate this function with a rational Padé approximant, which can then be converted into a state-space form. The resulting dynamical system's state matrices are intrinsically linked to the recurrence relations of Legendre polynomials, establishing them as a natural basis for approximating signal history.

The HiPPO framework \citep{Gu2020-de} generalized this approach by formalizing the online compression of a signal's history into a set of orthogonal polynomial coefficients, effectively solving the problem in reverse, by first approximating the signal and then constructing a dynamic model.

Under the HiPPO framework, the LMU is considered a special case referred to as HiPPO-LegT (Legendre Truncated). A key variant, HiPPO-LegS (Legendre Scaled), uses a time-varying measure $\mu(t,s) = 1/t$ to weight the inner product integral used for projecting the signal onto an orthogonal basis. This measure is defined on the interval $[0,t]$ and ensures that the total weight remains constant as time evolves: the projection coefficients are given by
\begin{equation}
    c_n(t) = \int_0^t u(s) L_n(s/t) \mu(t,s) \, ds,
\end{equation}
where $u(t)$ is the input signal, and $L_n$ is the $n$-th Legendre polynomial (normalized on $[0,1]$). This leads to a continuous-time Ordinary Differential Equation (ODE) for the projection coefficients $\bm{c}(t)$ that is fundamentally time-varying:
\begin{equation}
    \label{eq:hippo-legs}
    \bm{c}'(t) = \frac{1}{t} \bm{A} \bm{c}(t) + \frac{1}{t} \bm{B} u(t),
\end{equation}
where $\bm{c}(t)$ is the state vector of projection coefficients, and $\bm{A}$ and $\bm{B}$ are constant matrices derived from the properties of Legendre polynomials. However, the time-dependent factor $1/t$ makes this system difficult to implement directly as a standard time-invariant convolutional or recurrent model.

A major breakthrough in making these models practical was the Structured State Space Sequence (S4) model \citep{Gu2021-be}, which proposed using a time-invariant version of the system,
\begin{equation}
    \bm{c}'(t) = \bm{A} \bm{c}(t) + \bm{B} u(t).
\end{equation}
This is equivalent to dropping the $1/t$ factor from the HiPPO-LegS ODE (eq.\eqref{eq:hippo-legs}). This simplification proved remarkably effective, achieving state-of-the-art performance on tasks involving long-range dependencies \citep{Gu2021-be}. However, it initially lacked a clear theoretical justification, raising the question of why this approximation worked so well.

The crucial insight connecting these two systems was provided later in \citep{Gu2022-wk}. Corollary C.10 of that work proved that the time-invariant system used by S4 is not an arbitrary approximation, but is itself a principled HiPPO operator under a different construction. Specifically, it is equivalent to projecting the signal onto a basis of Legendre polynomials with an \textit{exponential time warping}, i.e., basis functions of the form $L_n(e^{s-t})$ (e.g., instead of $L_n(s/t)$) with an exponential measure $e^{s-t}$ over the infinite history $(-\infty, t]$: the projection coefficients are given by
\begin{equation}
c_n(t) = \int_{-\infty}^t u(s) L_n(e^{s-t}) e^{s-t} \, ds.
\end{equation}
However, the HiPPO derivation is a multi-stage, intricate process. First, a signal's history is approximated continuously using orthonormal polynomials under a chosen measure. Second, an ODE is derived to govern the evolution of the approximation coefficients, a step that can involve complex calculus with time-varying measures and basis functions. Finally, the resulting continuous-time system must be discretized to yield a practical, discrete-time recurrence. This multi-step pipeline, while elegant and robust, obscures the direct geometric connection between the signal history and the final state update. Our work builds directly on this insight, proposing a general framework where time warping and discretization are not a post-hoc justification, but the central, first-principles mechanism for constructing the SSM recurrence.

\section{OUR FRAMEWORK}

\label{sec:our-framework}
This section details our lag-operator framework, which constructs a discrete-time SSM by consecutively projecting the input signal onto an evolving time-warped basis. We introduce a novel lag operator, $\sigma_{t} \circ \sigma_{t+1}^{-1}$, which directly yields the state transition by quantifying a \textit{domain expansion} of the basis functions from one time step to the next.

\subsection{Definitions}
\label{sec:definitions}
\subsubsection{Orthonormal Basis System}
Consider a canonical interval \(Z\) (e.g., $(0, 1]$ for the shifted orthonormal Legendre polynomials or \([-\pi, \pi]\) for Fourier basis). 
For arbitrary functions $U, V$ supported on $Z$, an inner product is defined as:
\begin{equation}
    \langle U, V \rangle = \int_Z U(z) V(z) \, d\mu(z),
\end{equation}
where $\mu$ is a measure on $Z$ (e.g., the Lebesgue measure $d\mu(z)=dz$ for uniform weighting).
Define an orthonormal basis 
\begin{equation}
\Phi = \{\phi_n(z) \mid n = 0, 1, \dots, N-1, \, z \in Z\},
\end{equation}
with inner product \(\langle \phi_n, \phi_m \rangle = \delta_{nm}\). For normalized Legendre polynomials on $(0,1]$, \(\phi_n(z) = \sqrt{2n+1} P_n(2z-1)\) \citep{Gu2022-wk}, where \(P_n\) are the standard Legendre polynomials.

\subsubsection{Signal}

Consider a discrete-time input sequence \(\{u_k\} = (u_1, u_2, \dots, u_t, \dots) \in \mathbb{R}^\mathbb{N}\), observed sequentially up to time \(t\) (unit timestep; \(t \in \mathbb{N}\)).

To enable continuous-time analysis and basis transformations, interpolate \(u_k\) into a continuous function \(u: \mathbb{R} \to \mathbb{R}\) on time domain \(s\) via zero-order hold (ZOH): for history up to \(t\), 
\begin{equation}    
u(s) = u_k \quad \text{for} \quad s \in [k-1, k), \quad k = 1, \dots, t,
\end{equation}
with \(u(s) = 0\) for \(s < 0\) or \(s > t\). This piecewise-constant representation treats the discrete signal as continuous for projection onto orthonormal bases, preserving sample-and-hold semantics.

\subsubsection{Time Warping}

At time \(t\), the interval of interest is \(T_t = (-\infty, t]\), encompassing the signal history.

Define an invertible map \(\sigma_t\) that \textit{warps} the time interval \(T_t\) to the canonical interval \(Z\) (Fig. \ref{fig:s-z-map}): 
\begin{equation}
    \sigma_t: T_t \rightarrow Z, \quad s \in T_t \mapsto z = \sigma_t(s) \in Z,
\end{equation}
analogous to HiPPO's \(\sigma(t,s)\) \citep{Gu2022-wk}.

Construct an orthonormal basis on \(T_t\) by warping \(\phi_n\):
\begin{equation}
\psi_{t,n}(s) = \phi_n(\sigma_t(s)) = \phi_n \circ \sigma_t(s),
\end{equation}
denoted as \(\Psi_t\):
\begin{equation}
\begin{aligned}
\Psi_t &= \{\psi_{t,n}(s) \mid n = 0,\dots,N-1 \} \\
&= \Phi \circ \sigma_t.
\end{aligned}
\end{equation}


\begin{figure}[H]
    \centering
    \resizebox{0.5\textwidth}{!}{\begin{tikzpicture}[
    font=\sffamily,
    >=Stealth, 
    declare function={
      random_signal(\x) = 0.3*sin(5*\x r) + 0.3*sin(8*\x r + 1) + 0.1*sin(12*\x r + 2) + 0.7;
      scale_range(\x) = \x / (2.3);
      time_warping(\x) = (ln(\x + 1)/2)/0.2 + 0.5;
      weight_constant(\x) = 1;
      weight_exponential(\x) = 2*exp(\x);
    }
]

\begin{scope}[yshift=3cm]
    \draw[black, thick, domain=-4:2.3, samples=401] 
        plot (\x, {random_signal(\x)});
    \draw[ForestGreen, thick, domain=-4:2.3, samples=401] 
        plot (\x, {weight_exponential((\x-2.3))/2});
    \node[black, right, font=\Large] at (2.3, 0.6) {$u(s)$};
    \node[ForestGreen, right, font=\Large] at (2.3, 1.2) {$\omega_t(s)$};
    
    \draw[->] (-4, 0) -- (3, 0) node[right, font=\large] {$s$};
    \node[anchor=west, font=\large] at (-5, 0) {$-\infty$};
    \node[below, font=\large] at (2.3, -0.15) {$t$};
    \draw (2.3, -0.15) -- (2.3, 0.15);
\end{scope}

\begin{scope}[yshift=-1cm]
    \draw[->] (-3, 0) -- (3, 0) node[right, font=\large] {$z$};
    
    \node[below, font=\large] at (2.3, -0.15) {$1$};
    \draw (2.3, -0.15) -- (2.3, 0.15);
    \node[below, font=\large] at (-2.3, -0.15) {$0$};
    \draw (-2.3, -0.15) -- (-2.3, 0.15);

    \tikzset{
    }
    \draw[black, thick, domain=-2.299:2.3, samples=401] 
        plot (\x, {random_signal(time_warping(scale_range(\x)))});
    \draw[ForestGreen, thick, domain=-2.3:2.3, samples=401] 
        plot (\x, {weight_constant(0.4*\x)});
    \node[black, right, font=\Large] at (2.5, 0.6) {$u \circ \sigma^{-1}(z)$};
    \node[ForestGreen, right, font=\Large] at (2.5, 1.2) {$d\mu(z) = dz$};
\end{scope}


\coordinate (map_start_s) at (0.5, 2.6);
\coordinate (map_end_z) at (0.5, 0.2);
\coordinate (map_start_z) at (-0.5, 0.2);
\coordinate (map_end_s) at (-0.5, 2.6);

\node[align=center, font=\Large] at (1.6, 1.5) {$\sigma_t(s)$};
\node[align=center, font=\Large] at (-1.8, 1.5) {$\sigma_t^{-1}(z)$};

\draw[-{Stealth[length=3mm, width=2mm]}] 
    (map_start_s) .. controls (1, 2.0) and (1, 1.0) .. (map_end_z);
    
\draw[-{Stealth[length=3mm, width=2mm]}] 
    (map_start_z) .. controls (-1, 1.0) and (-1.0, 2.0) .. (map_end_s);

\draw[-{Stealth[length=2mm]}, black, dashed] (2.3, 2.3) -- (2.3, -0.6);

\draw[-{Stealth[length=2mm]}, black, dashed] (-4.5, 2.7) .. controls (-4, 1) .. (-2.6, -0.6);

\end{tikzpicture}}
    \caption{Illustration of the time-warping function \(\sigma_t\). 
    The infinite history interval \(T_t = (-\infty, t]\) (s-axis) is mapped to the bounded canonical interval \(Z = (0, 1]\) (z-axis). 
    The input signal \(u(s)\) (black) is weighted by the exponentially decaying measure density \(\omega_t(s) = |\sigma_t'(s)|\) (green curve, shown here for the HiPPO-LegS exponential warp \(\sigma_t(s) = e^{s-t}\)). 
    In the canonical domain the induced measure becomes uniform (\(d\mu(z) = dz\), density = 1, green line). 
    The warped signal \(u \circ \sigma_t^{-1}(z)\) (bottom black curve) shows compression of older history (dense oscillations near \(z=0\)) and higher resolution of recent history (near \(z=1\)).}
    \label{fig:s-z-map}
\end{figure}

\subsubsection{Induced Inner Product and Measure}
The inner product on \(T_t\), denoted \(\langle \cdot, \cdot \rangle_{\omega_{t}}\), is defined on the weighted Lebesgue space \(L_2(T_t, \omega_t ds)\), where \(\omega_t(s) = |\sigma_t'(s)|\) is the induced density ensuring orthonormality. It arises as the \textit{pullback} of the canonical inner product on \(L_2(Z)\) under the map \(\sigma_t^{-1}\).

Let $u(s)$ and $v(s)$ be functions on the time interval $T_t$, defined via composition with the warping map from functions $U(z)$ and $V(z)$ on the canonical interval $Z$,
\begin{equation}
\begin{aligned}
    u(s) &= (U \circ \sigma_t)(s) = U(\sigma_t(s)), \\
    v(s) &= (V \circ \sigma_t)(s) = V(\sigma_t(s)).
\end{aligned}
\end{equation}

Then the inner product on $T_t$ can be derived from $Z$ via change of variables in the inner product.
\begin{equation}
\begin{aligned}
    \langle u, v \rangle_{\omega_{t}} &= \langle U, V \rangle \\
    &= \int_Z U(z) V(z) \, dz \\
    &= \int_{T_t} (U \circ \sigma_t)(s) (V \circ \sigma_t)(s) \, |\sigma_t'(s)| \, ds \\
    &= \int_{T_t} u(s) \, v(s) \, \omega_t(s) \, ds ,
\end{aligned}
\end{equation}
with \(\omega_{t}(s) = |\sigma_t'(s)|\).

This ensures the warped basis \(\Psi_t\) is orthonormal with respect to this inner product:
\begin{equation}
    \langle \psi_{t,n}, \psi_{t,m} \rangle_{\omega_t} = \langle \phi_n, \phi_m \rangle = \delta_{nm}.
\end{equation}

As an example, for exponential forgetting in HiPPO-LegS \citep{Gu2022-wk}, the warping function and inverse are: $z = \sigma_t(s) = e^{s-t}, s = \sigma_t^{-1}(z) = t + \log(z)$, yielding \(\omega_{t}(s) = e^{s-t}\), which integrates to 1 over \(T_t\) (normalizing the ``forgetting rate'').

\subsection{Approximation of the Signal}

Consider approximating a signal \(u\) on \(T_t\) (i.e., the history up to time \(t\)) using the inner product induced by the image $\sigma_t^{-1} L_2(Z)$. Since \(\{\psi_{t,n}\}\) are orthonormal under \(\langle \cdot, \cdot \rangle_{\omega_{t}}\), the optimal \(L^2\) approximation $\hat{u}_t$ is:
\begin{equation}
\begin{aligned}
\label{eq:recon-u-w-projection}
\hat{u}_t(s) &= \sum_{n=0}^{N-1} c_{t,n} \psi_{t,n}(s), \\
c_{t,n} &= \langle \psi_{t,n},\, u \rangle_{\omega_{t}} = \langle \phi_n, \, u \circ \sigma_t^{-1} \rangle.
\end{aligned}
\end{equation}

Equivalently, approximate the warped signal \(U(z) = u(\sigma_t^{-1}(z))\) on \(Z\):
\begin{equation}
\hat{U}_t(z) = \sum_{n=0}^{N-1} c_{t,n} \phi_n(z).
\end{equation}

Note that coefficients \(c_{t,n}\) are equivalent whether computed on \(Z\) or \(T_t\).
In matrix form:
\begin{equation}
\begin{aligned}
    \bm{c}_t &= (c_{t,0}, \dots, c_{t,N-1})^\mathsf{T}, \\
    \bm{\Psi}_t &= (\psi_{t,0}, \dots, \psi_{t,N-1})^\mathsf{T}, \\
    \bm{\Phi} &= (\phi_{0}, \dots, \phi_{N-1})^\mathsf{T}.
\end{aligned}
\end{equation}
Hence:
\begin{equation}
\begin{aligned}
    \hat{u}_{t} &= \bm{\Psi}_t^\mathsf{T} \bm{c}_t, \\
    \hat{U}_{t} &= \bm{\Phi}^\mathsf{T} \bm{c}_t.
\end{aligned}
\end{equation}

Geometrically, this projects the signal's entire history up to $t$ onto an orthogonal basis, compressing it into \(N\) coefficients.

\subsection{Online Updating of the Approximation}
\label{sec:online-approx}
Upon receiving new input $u_{t{+}1}$ at time $t{+}1$, extend the history to $T_{t{+}1} = (-\infty, t{+}1]$ and compute $c_{t+1,n}$.

Define an interim extended signal (Fig.\ref{fig:extended_signal_def}), modeling impulse addition per HiPPO \citep{Gu2020-de}:
\begin{equation}
    \label{eq:extended_signal_def}
    \tilde{u}_{t{+}1}(s) = \hat{u}_{t}(s) + \Input_{t{+}1}(s) \cdot u_{t{+}1},
\end{equation}
where \(\hat{u}_{t}(s)\) is the prior approximation on $T_t = (-\infty, t]$ (zero elsewhere), and \(\Input_{t{+}1}(s)\) is a distribution or function that mixes the new sample linearly over the interval boundary (e.g., Dirac delta\footnote{While the Dirac delta is not in $L_2$, the projection is well-defined distributionally and aligns with HiPPO conventions \citep{Gu2020-de}. See Appendix~\ref{app:b-derivations} for convergence to $L_2$ holds.} 
at $s=t{+}1$, zero-order hold constant over $[t, t{+}1]$, or first-order hold linear interpolation). The theoretical implications of these choices and their convergence to a common continuous-time generator are discussed in Section~\ref{sec:theoretical-recover-hippo} and Appendix~\ref{app:b-discrete-and-continuous-discussion}.

\begin{figure}[H]
    \centering
    \begin{tikzpicture}[
    font=\sffamily,
    >=Stealth, 
    declare function={
      random_signal(\x) = 0.3*sin(5*\x r) + 0.3*sin(8*\x r + 1) + 0.1*sin(12*\x r + 2) + 0.7;
      scale_range(\x) = \x / (2.3);
      time_warping(\x) = (ln(\x + 1)/2)/0.2 + 0.5;
    }
]

\begin{scope}[yshift=3cm]
    \draw[->] (-4, 0) -- (3, 0) node[right] {$s$};
    
    \node[anchor=west] at (-4.8, 0) {$-\infty$};
    \node[below] at (1.0, -0.15) {$t$};
    \draw (1.0, -0.15) -- (1.0, 0.15);

    \draw[black] (1.5, 0) -- (1.5, 0.8); 
    \fill[black] (1.5, 0.8) circle (2pt); 
    \node[black, right] at (1.5, 0.8) {${u}_{t+1}$};
    \node[below] at (1.7, -0.15) {$t+1$};
    \draw (1.5, -0.15) -- (1.5, 0.15);


    \draw[black, thick, domain=-4:1.0, samples=401] 
        plot (\x, {random_signal(\x)});
    \draw[black] (1.0, 0) -- (1.0, 0.65); 
    \node[black, above] at (-0.5, 1.0) {$\hat{u}_t(s)$};
\end{scope}

\end{tikzpicture}
    \caption{Conceptualization of the extended signal $\tilde{u}_{t{+}1}(s)$. The new input $u_{t{+}1}$ is mixed over the interval boundary to the prior signal approximation $\hat{u}_t(s)$. }
    \label{fig:extended_signal_def}
\end{figure}

The updated approximation $\hat{u}_{t{+}1}$ (on $T_{t{+}1}$) is the projection of $\tilde{u}_{t{+}1}$ onto $\bm{\Psi}_{t{+}1}$, yielding the new coefficients $c_{t+1,n}$, we obtain:
\begin{equation}
    \begin{aligned}
        \hat{u}_{t{+}1}(s) 
        &= \sum_{n=0}^{N-1} \langle \psi_{t{+}1,n}, \tilde{u}_{t{+}1} \rangle_{\omega_{t{+}1}} \, \psi_{t{+}1,n}(s) \\
        &= \sum_{n=0}^{N-1} c_{t{+}1,n} \psi_{t{+}1,n}(s). 
    \end{aligned}
\end{equation}

Substituting $\tilde{u}_{t{+}1}$ in eq.\eqref{eq:extended_signal_def}, we obtain:
\begin{equation}
    \begin{aligned}
        c_{t{+}1,n}
        &= \langle \psi_{t{+}1,n}, \,(\hat{u}_{t} + \Input_{t{+}1} \, u_{t{+}1}) \rangle_{\omega_{t{+}1}} \\
        &= \langle \psi_{t{+}1,n}, \hat{u}_{t} \rangle_{\omega_{t{+}1}} 
        + \langle \psi_{t{+}1,n}, \,\Input_{t{+}1} \, u_{t{+}1} \rangle_{\omega_{t{+}1}} \\
        &= \sum_{m=0}^{N-1} c_{t,m} \, \langle \psi_{t{+}1,n},\, \psi_{t,m} \rangle_{\omega_{t{+}1}} \\
        & \quad \quad \quad + \langle \psi_{t{+}1,n}, \, \Input_{t{+}1} \rangle_{\omega_{t{+}1}} \, u_{t{+}1} \\
        &= \sum_{m=0}^{N-1} c_{t,m} (\bm{A}_t)_{nm} + (\bm{B}_t)_{n} \, u_{t{+}1},
    \end{aligned}
\end{equation}

where 
\begin{equation}
\begin{aligned}
\label{eq:define-a-as-lag}
    (\bm{A}_t)_{nm} 
    &= \langle \psi_{t{+}1,n}, \psi_{t,m} \rangle_{\omega_{t{+}1}} \\
    &= \langle \phi_n , \phi_m \circ \sigma_{t} \circ \sigma_{t{+}1}^{-1} \rangle, \\
\end{aligned}
\end{equation}
and
\begin{equation}
\begin{aligned}
\label{eq:define-b}
    (\bm{B}_t)_n &= \langle \psi_{t{+}1,n}, \Input_{t{+}1}(s) \rangle_{\omega_{t{+}1}}.
\end{aligned}
\end{equation}

Here, $\bm{B}_t$ is derived from the projection of Input onto $\bm{\Psi}_{t{+}1}$ (see Appendix~\ref{app:b-discrete-and-continuous-discussion} for the derivation of \(\mathbf{B}_t\) under different input-hold assumptions).
Thus,
\begin{equation}
\begin{aligned}
    \hat{u}_{t{+}1}(s) &= \sum_{n=0}^{N-1} c_{t{+}1,n} \psi_{t{+}1,n}(s) \\
    &= \sum_{n=0}^{N-1} \left( \sum_{m=0}^{N-1} c_{t,m} (\bm{A}_t)_{nm} + (\bm{B}_t)_{n} u_{t{+}1} \right) \psi_{t{+}1,n}(s).
\end{aligned}
\end{equation}

In vector form, this yields the HiPPO SSM recurrence \citep{Gu2020-de}
\begin{equation}
\begin{aligned}
    \hat{u}_{t{+}1} &= \bm{\Psi}_{t{+}1}^\mathsf{T} \bm{c}_{t{+}1}, \\
    \bm{c}_{t{+}1} &= \bm{A}_t \bm{c}_t + \bm{B}_t \, u_{t{+}1}.
\label{eq:ssm-recurrence}
\end{aligned}
\end{equation}

The composition of \(\bm{A}_t\) in eq.\eqref{eq:define-a-as-lag}, which we define as the backward \textbf{lag operator} \( \sigma_{t} \circ \sigma_{t+1}^{-1}\), acts on the canonical basis functions to accommodate the \textit{domain expansion} as time advances from $t$ to $t+1$. This formulation presents a novel insight that geometrically interprets HiPPO's state transition as a ``re-warping'' of projections. A geometric illustration of this domain-expansion process realized by the lag operator is provided in Appendix~\ref{app:domain-expansion} (Figure~\ref{fig:domain-expansion}).

\section{THEORETICAL VALIDATION}

\subsection{Recovering the Time-Invariant HiPPO-LegS System}
\label{sec:theoretical-recover-hippo}
To bridge the discrete-time framework of Section~\ref{sec:online-approx} to continuous-time limits and connect it to the time-invariant HiPPO-LegS model \citep{Gu2022-wk}, we generalize the unit timestep $[t, t+1]$ to an arbitrarily small step $[t, t+\Delta]$. 
We then analyze the limit as $\Delta \to 0$. In this regime, the time-varying matrices $\bm{A}_t$ and $\bm{B}_t$ (from eq.~\eqref{eq:ssm-recurrence}) become step-size-dependent $\bm{A}_\Delta$ and $\bm{B}_\Delta$, which approximate the identity plus a first-order correction: $\bm{A}_\Delta \approx \bm{I} + \Delta \bm{A}_{\text{gen}}$ and $\bm{B}_\Delta \approx \Delta \bm{B}_{\text{gen}}$. These converge to the generators of the continuous-time SSM
\begin{equation}
\begin{aligned}
    \bm{c}'(t) &= \bm{A}_{\text{gen}} \bm{c}(t) + \bm{B}_{\text{gen}} u(t),
\end{aligned}
\end{equation}
via the forward difference: $\bm{c}'(t) = \lim_{\Delta \to 0} \frac{\bm{c}(t+\Delta) - \bm{c}(t)}{\Delta}$. The discrete recurrence $\bm{c}_{t+\Delta} = \bm{A}_\Delta \bm{c}_t + \bm{B}_\Delta u_{t+\Delta}$ thus recovers the continuous dynamics through a first-order Taylor expansion of the lag operator $\sigma_t \circ \sigma_{t+\Delta}^{-1}$ (eq.~\eqref{eq:define-a-as-lag}). For the stable HiPPO-LegS system, the generator $\bm{A}_{\text{gen}}$ relates to the canonical HiPPO matrix via $\bm{A}_{\text{HiPPO}} = -(\bm{A}_{\text{gen}} + \bm{I})^\mathsf{T}$ (incorporating a tilting shift for measure renormalization; see Appendix~\ref{app:a_gen-derivations}), while $\bm{B}_{\text{gen}}$ matches directly. For clarity, we summarize the key matrices and their relationships (see also Table~\ref{tab:matrix-summary} in Appendix~\ref{app:matrix-summary} for a tabular overview):
\begin{itemize}
    \item $\bm{A}_t$ and $\bm{B}_t$: The discrete-time transition and input matrices at time $t$, as defined in eq.\eqref{eq:ssm-recurrence}, for unit timestep ($\Delta=1$).
    \item $\bm{A}_\Delta$ and $\bm{B}_\Delta$: Generalizations of $\bm{A}_t$ and $\bm{B}_t$ for arbitrary timestep $\Delta$, used to analyze the continuous limit. Note that \(\bm{A}_\Delta\) and \(\bm{B}_\Delta\) depend on \(t\) via \(\sigma_t \circ \sigma_{t + \Delta}\), but become time-invariant under stationary warping.
    \item $\bm{A}_{\text{gen}}$ and $\bm{B}_{\text{gen}}$: Continuous-time generators in the ODE $\bm{c}'(t) = \bm{A}_{\text{gen}} \bm{c}(t) + \bm{B}_{\text{gen}} u(t)$, derived as the first-order terms in the Taylor expansion of the discrete recurrence: $\bm{c}(t+\Delta) = \bm{c}(t) + \Delta [\bm{A}_{\text{gen}} \bm{c}(t) + \bm{B}_{\text{gen}} u(t)] + O(\Delta^2)$. Equivalently, $\bm{c}'(t) = \lim_{\Delta \to 0} [\bm{c}(t+\Delta) - \bm{c}(t)] / \Delta$.
    \item $\bm{A}_{\text{HiPPO}}$: The stable state matrix from HiPPO-LegS, related to our $\bm{A}_{\text{gen}}$ by transposition and a tilting shift: $\bm{A}_{\text{HiPPO}} = -(\bm{A}_{\text{gen}}^\mathsf{T} + \bm{I})$.
\end{itemize}

The derivations of $\bm{A}_{\text{gen}}$ and $\bm{B}_{\text{gen}}$ are given in Propositions~\ref{prop:a-gen} and~\ref{prop:b-gen} (with proofs in Appendices~\ref{app:a_gen-derivations} and~\ref{app:b-gen-proof}), and we show their equivalence to HiPPO-LegS below.

The components are:

\textbf{Canonical Interval} $Z$: The interval $(0, 1]$.

\textbf{Canonical Basis} $\bm{\Phi}$: The Legendre polynomials $L_n(z)$ are orthonormal over $(0, 1]$.

\textbf{Warping Function} $\sigma_t$: The exponential map $\sigma_t(s) = e^{s-t}$, which maps the time domain $s \in (-\infty, t]$ to the canonical interval $z \in (0, 1]$.

To derive the continuous-time state generator $\bm{A}_{\text{gen}}$, which governs the evolution of the state in the limit of infinitesimally small time steps, we introduce the following proposition that encapsulates this process for a stationary warping function.

\begin{proposition}[Continuous Generator $\bm{A}_{\text{gen}}$]
\label{prop:a-gen}
For a stationary warping $\sigma_t(s) = f(s - t)$ with inverse $g = f^{-1}$, the generator matrix is
\begin{equation}
(\bm{A}_{\text{gen}})_{nm} = \left\langle \phi_n, \frac{\phi'_m}{g'} \right\rangle = \int_Z \phi_n(z) \frac{\phi'_m(z)}{g'(z)} \, dz.
\end{equation}
(See Appendix~\ref{app:a_gen-derivations} for proof.)
\end{proposition}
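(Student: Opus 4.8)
The plan is to read off $\bm{A}_{\text{gen}}$ as the coefficient of $\Delta$ in a first-order expansion of the step-$\Delta$ transition matrix, so the only real content is a Taylor expansion of the backward lag operator. By the step-$\Delta$ form of eq.~\eqref{eq:define-a-as-lag}, a step $[t,t+\Delta]$ produces $(\bm{A}_\Delta)_{nm} = \langle \phi_n,\, \phi_m \circ \ell_{t+\Delta\to t}\rangle$ with $\ell_{t+\Delta\to t} = \sigma_t \circ \sigma_{t+\Delta}^{-1}$, the bracket being the canonical inner product on $Z$; hence it suffices to expand $\ell_{t+\Delta\to t}$ to first order in $\Delta$ and invoke orthonormality of $\Phi$.

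First I would evaluate the lag operator for a stationary warp. From $\sigma_t(s)=f(s-t)$ one gets $\sigma_{t+\Delta}^{-1}(z)=(t+\Delta)+g(z)$, so
\[
\ell_{t+\Delta\to t}(z) = \sigma_t\big((t+\Delta)+g(z)\big) = f\big(g(z)+\Delta\big).
\]
Expanding about $\Delta=0$, using $f\circ g=\mathrm{id}_Z$ and the inverse-function identity $f'(g(z))=1/g'(z)$ (obtained by differentiating $f(g(z))=z$), gives $\ell_{t+\Delta\to t}(z) = z + \Delta/g'(z) + O(\Delta^2)$; this is exactly the statement that the lag operator is the identity plus an infinitesimal ``slide'' with local rate $1/g'$.

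Next I would compose with $\phi_m$ and integrate. Since each $\phi_m$ is a polynomial, the chain rule with a remainder controlled uniformly on $Z$ (needing only that $g$ be differentiable with $g'\neq 0$ on the interior and that $\phi_n\phi_m'/g'$ be integrable on $Z$ --- both automatic for the Legendre/exponential instance) yields $\phi_m(\ell_{t+\Delta\to t}(z)) = \phi_m(z) + \Delta\,\phi'_m(z)/g'(z) + O(\Delta^2)$. Pairing with $\phi_n$ and using $\langle\phi_n,\phi_m\rangle=\delta_{nm}$,
\[
(\bm{A}_\Delta)_{nm} = \delta_{nm} + \Delta\,\Big\langle \phi_n,\, \tfrac{\phi_m'}{g'}\Big\rangle + O(\Delta^2),
\]
i.e. $\bm{A}_\Delta = \bm{I} + \Delta\bm{A}_{\text{gen}} + O(\Delta^2)$ with $(\bm{A}_{\text{gen}})_{nm} = \langle\phi_n,\phi_m'/g'\rangle$. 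Substituting this together with $\bm{B}_\Delta = \Delta\bm{B}_{\text{gen}} + O(\Delta^2)$ into $\bm{c}_{t+\Delta}=\bm{A}_\Delta\bm{c}_t+\bm{B}_\Delta u_{t+\Delta}$ and dividing by $\Delta$ recovers $\bm{c}'(t)=\bm{A}_{\text{gen}}\bm{c}(t)+\bm{B}_{\text{gen}}u(t)$ in the forward-difference limit, confirming that the matrix just computed is the continuous-time generator.

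I expect the main obstacle to be bookkeeping at the ``present'' endpoint of $Z$, not the bulk expansion. The image of $\operatorname{supp}\hat u_t$ under $\sigma_{t+\Delta}$ is a proper subinterval of $Z$ (for the exponential warp, $(0,e^{-\Delta}]$), so whether one integrates over all of $Z$ with $\phi_m$ read through its polynomial extension past the endpoint --- the reading under which eq.~\eqref{eq:define-a-as-lag} holds verbatim --- or over that shrinking subinterval changes the $O(\Delta)$ coefficient by a rank-one boundary term proportional to $\phi_n(1)\phi_m(1)$ (equivalently, by one integration by parts). This is precisely the measure-renormalization / tilting discrepancy that the paper isolates in the relation $\bm{A}_{\text{HiPPO}} = -(\bm{A}_{\text{gen}}^\mathsf{T}+\bm{I})$ and dispatches in Appendix~\ref{app:a_gen-derivations}; with the first reading it does not affect the proposition as stated.
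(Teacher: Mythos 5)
Your proposal is correct and follows essentially the same route as the paper's Appendix proof: both compute $\ell_{t+\Delta\to t}(z)=f(\Delta+g(z))$, take the first-order term in $\Delta$ of $\phi_m\circ\ell_{t+\Delta\to t}$ via the chain rule and the identity $f'(g(z))=1/g'(z)$, and pair with $\phi_n$ to read off $(\bm{A}_{\text{gen}})_{nm}=\langle\phi_n,\phi_m'/g'\rangle$ from $\bm{A}_\Delta=\bm{I}+\Delta\bm{A}_{\text{gen}}+O(\Delta^2)$. Your closing remark correctly identifies the endpoint/tilting subtlety as lying outside the proposition as stated, consistent with how the paper defers it to the relation $\bm{A}_{\text{HiPPO}}=-(\bm{A}_{\text{gen}}^\mathsf{T}+\bm{I})$.
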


Applying Proposition~\ref{prop:a-gen} to the exponential warp ($f(x) = e^x$, $g(z) = \ln(z)$), we obtain
\begin{equation}
    (\bm{A}_{\text{gen}})_{nm} = \int_0^1 L_n(z) \cdot z \cdot L_m'(z) \, dz,
\end{equation}
with diagonals $n$ and off-diagonals $\sqrt{(2n+1)(2m+1)}$ for $n < m$ (upper triangular). This matches the transpose of the reference matrix $\bm{A}^0$ from \citep{Gu2022-wk} (App. C.8), defined as
\begin{equation}
    \label{eq:original-a-0-def}
    (\bm{A}^0)_{nm} =
    \begin{cases} 
        \sqrt{(2n+1)(2m+1)} & \text{if } m < n, \\ 
        n & \text{if } n=m, \\ 
        0 & \text{otherwise}. 
    \end{cases}
\end{equation}
Thus, $\bm{A}_{\text{gen}} = (\bm{A}^0)^\mathsf{T}$. The HiPPO-LegS state matrix is $\bm{A}_{\text{HiPPO}} = -(\bm{A}^0 + \bm{I})$, related by transposition and a shift: $\bm{A}_{\text{HiPPO}} = -(\bm{A}_{\text{gen}} + \bm{I})^\mathsf{T}$, where the shift arises from a ``tilting'' term in the HiPPO derivation, adjusting for measure renormalization.

For the continuous-time input generator $\bm{B}_{\text{gen}}$, our framework is flexible with respect to the input model (e.g., Dirac delta, ZOH, or FOH; see Appendix~\ref{app:b-derivations}). Regardless of the choice, the following proposition shows convergence to the same continuous $\bm{B}_{\text{gen}}$ in the limit $\Delta \to 0$, equivalent to the Dirac boundary evaluation, reassuring that all models recover the HiPPO input matrix.

\begin{proposition}[Continuous Generator for $\bm{B}_{\rm gen}$]
\label{prop:b-gen}
Let the discrete input matrix $\bm{B}_\Delta$ be the coefficient such that the state update includes the term $\bm{B}_\Delta u(t+\Delta)$. 
In the continuous limit $\Delta \to 0$, the generator is given by
\begin{equation}
    (\bm{B}_{\rm gen})_n = \lim_{\Delta \to 0} \frac{(\bm{B}_\Delta)_n}{\Delta} = \phi_n(1) f'(0).
\end{equation}
(See Appendix~\ref{app:b-gen-proof} for proof.)
\end{proposition}

Applying Proposition~\ref{prop:b-gen} to the exponential warp ($f(x) = e^x$, $f'(0)=1$) yields
\begin{equation}
\label{eq:b-gen-def}
(\bm{B}_{\text{gen}})_n = \phi_n(1) = L_n(1) = \sqrt{2n+1},
\end{equation}
which matches the HiPPO-LegS input matrix from \citep{Gu2022-wk} (App. C.8). This confirms consistency with the continuous-time ODE summarized at the start of this section.

Hence, our framework successfully derives generators equivalent to the HiPPO-LegS system, confirming its validity as a novel geometric construction capable of recovering and extending foundational SSMs.

\subsection{Long-Term Behavior of the State Transition and its Connection to the Continuous-Time Generator}
\label{sec:long-term-behavior}
This section validates the continuous-time generator $\bm{A}_{\text{gen}}$ by showing that the discrete-time transition matrix $\bm{A}_t$, derived from a finite time step, converges to a stationary matrix $\bm{A}_\infty$ that is consistent with $\bm{A}_{\text{gen}}$ through the matrix exponential.

For stationary warping $\sigma_t(s) = f(s-t)$ with inverse $g = f^{-1}$, the backward lag operator $\sigma_t \circ \sigma_{t+\Delta}^{-1} (z)= f(\Delta + g(z))$ is time-independent. Thus, $\bm{A}_t$ is constant, and we define the stationary transition matrix $\bm{A}_\infty = \bm{A}_\Delta$ (as summarized in Section~\ref{sec:theoretical-recover-hippo}). Note that \(\bm{A}_\Delta\) depends on \(t\) via \(\sigma_t \circ \sigma^{-1}_{t + \Delta}\) in general, but becomes time-invariant here under stationary warping.
\begin{equation}
(\bm{A}_\infty)_{nm} = (\bm{A}_\Delta)_{nm} = \int_Z \phi_n(z) \phi_m\big(f(\Delta + g(z))\big) \, dz.
\end{equation}

This provides a first-principles justification for time-invariant systems like S4. For HiPPO-LegS exponential warp ($f(x) = e^x$, $g(z) = \ln(z)$),
\begin{equation}
(\bm{A}_\Delta)_{nm} = \int_0^1 L_n(z) L_m(e^\Delta z) \, dz,
\end{equation}
the exact discrete transition over $\Delta$.

The matrix $\bm{A}_\Delta$ relates to $\bm{A}_{\text{gen}}$ by $\bm{A}_\Delta = e^{\Delta \cdot \bm{A}_{\text{gen}}}$. To match the stable HiPPO system, incorporate the tilting term: the stable generator is $\bm{A}_{\text{stable}} = -(\bm{A}_{\text{gen}} + \bm{I})$, and the named $\bm{A}_{\text{HiPPO}} = \bm{A}_{\text{stable}}^{\mathsf{T}} = -(\bm{A}_{\text{gen}}^{\mathsf{T}} + \bm{I})$. The corrected discrete matrix is
\begin{equation}
\label{eq:a-delta-corrected}
\bm{A}_\Delta^{\text{corrected}} = e^{\Delta \cdot \bm{A}_{\text{stable}}} = (\bm{A}_\Delta)^{-1} e^{-\Delta}.
\end{equation}
This matches our numerical experiments and shows the exact discrete transition can be found by inverting $\bm{A}_\Delta$ and applying a scalar decay. The transpose in HiPPO is notational, and our framework derives the underlying stable dynamics.


\section{NUMERICAL VALIDATION}
\label{sec:numerical-validation}
This section provides numerical validation of our lag-operator framework. We confirm our theoretical claims by demonstrating both matrix equivalence and accurate signal reconstruction against the HiPPO-LegS system. While this study serves as a proof of concept, broader benchmarking is deferred to future work.

\subsection{Experimental Setup}
\label{sec:experimental-setup}
We numerically validate our framework by recovering the time-invariant HiPPO-LegS system, using the same components (exponential warp $\sigma_t(s) = e^{s-t}$, orthonormal Legendre basis on $(0,1]$) detailed for the theoretical validation in Section~\ref{sec:theoretical-recover-hippo}. As input, we use a 1D signal from the chaotic Lorenz63 system ($\sigma=10, \rho=28, \beta=8/3$), which provides a challenging, non-stationary signal to test memory compression. For our simulation, we use a basis size of $N=64$, a time step of $\Delta=0.01$ s, and a total duration of $ T=10$ s, parameters chosen to match the original HiPPO demos. \footnote{The original HiPPO source code and demos with parameters can be found at: \url{https://github.com/state-spaces/s4/tree/main/notebooks}}

\subsection{Validation of Core Theoretical Claims}
This subsection compares key matrices and signal reconstructions with those of HiPPO-LegS, ensuring that our approach correctly captures its dynamics.

\subsubsection{Matrix Equivalence Validation}
We assess matrix equivalence by measuring relative difference using the Frobenius norm ($\text{Diff}(\bm{M}_1, \bm{M}_2) = \|\bm{M}_1 - \bm{M}_2\|_F / \|\bm{M}_1\|_F$).

\textbf{Evaluation of $\bm{A}_\infty$ vs. $\exp(\Delta \bm{A}_{\text{gen}})$:} We compare the stationary matrix $\bm{A}_\infty = \bm{A}_\Delta$ (computed via lag operator integration from Section~\ref{sec:long-term-behavior}) with the matrix exponential $\exp(\Delta \bm{A}_{\text{gen}})$, where $\bm{A}_{\text{gen}}$ is from Proposition~\ref{prop:a-gen}. Table~\ref{tab:A_infty_vs_A_gen} shows the differences across various $\Delta$ values, indicating convergence as $\Delta \to 0$.
\begin{table}[h]
    \centering
    \caption{Frobenius relative difference between $\bm{A}_\infty$ and $\exp(\Delta \bm{A}_{\text{gen}})$ across $\Delta$ values}
    \label{tab:A_infty_vs_A_gen}
    \begin{tabular}{lcccc}
        \toprule
        $\mathbf{\Delta}$ & $\mathbf{10^{-4}}$ & $\mathbf{10^{-3}}$ & $\mathbf{10^{-2}}$ & $\mathbf{10^{-1}}$ \\
        \midrule
        Diff & $7.85 \cdot 10^{-12}$ & $3 \cdot 10^{-11}$ & $1.52 \cdot 10^{-9}$ & $4.24 \cdot 10^{-5}$ \\
        \bottomrule
    \end{tabular}
\end{table}

\textbf{Evaluation of $\bm{A}_{\text{gen}}$ vs. $\bm{A}_{\text{HiPPO}}$:} We confirm $\bm{A}_{\text{HiPPO}} = -(\bm{A}_{\text{gen}} + \bm{I})^\mathsf{T}$ by comparing the numerically integrated $\bm{A}_{\text{gen}}$ with the analytical HiPPO-LegS matrix. Table~\ref{tab:a_hippo_vs_a_gen} shows the differences across basis sizes $N$, demonstrating high precision.
\begin{table}[h]
    \centering
    \caption{Frobenius relative difference between $\bm{A}_{\text{HiPPO}}$ and $-(\bm{A}_{\text{gen}} + \bm{I})^\mathsf{T}$ across $N$ values}
    \label{tab:a_hippo_vs_a_gen}
    \begin{tabular}{lccc}
        \toprule
        \textbf{$N$} & \textbf{10} & \textbf{30} & \textbf{50} \\
        \midrule
        Diff & $2.26 \cdot 10^{-14}$ & $3.69 \cdot 10^{-13}$ & $5.26 \cdot 10^{-12}$ \\
        \bottomrule
    \end{tabular}
\end{table}

\textbf{Evaluation of $\bm{A}_\Delta^{\text{corrected}}$ vs. Discretized $\bm{A}_{\text{HiPPO}}$:} We verify that our exact discrete matrix, $\bm{A}_\Delta^{\text{corrected}} = (\bm{A}_\Delta)^{-1} e^{-\Delta}$ aligns with the standard bilinear discretization of $\bm{A}_{\text{HiPPO}}$. Table~\ref{tab:a_correct_vs_a_hippo} shows the differences across $\Delta$ values, indicating convergence as $\Delta \to 0$.
\begin{table}[h]
    \centering
    \caption{Frobenius relative difference between $\bm{A}_\Delta^{\text{corrected}}$ and discretized $\bm{A}_{\text{HiPPO}}$ across $\Delta$ values}
    \label{tab:a_correct_vs_a_hippo}
    \begin{tabular}{lcccc}
        \toprule
        $\mathbf{\Delta}$ & $\mathbf{10^{-4}}$ & $\mathbf{10^{-3}}$ & $\mathbf{10^{-2}}$ & $\mathbf{10^{-1}}$  \\
        \midrule
        Diff & $5.95 \cdot 10^{-5}$ & $0.0257$ & $0.334$ & $0.955$ \\
        \bottomrule
    \end{tabular}
\end{table}

These results show convergence as $\Delta \to 0$ or $N \to \infty$, validating that our lag operator framework accurately recovers HiPPO's continuous and discrete dynamics.

\subsubsection{Geometric Lag Operator Validation}
To illustrate the lag operator's geometric intuition, we apply $\bm{A}_\Delta$ as a backward shift and its inverse as a forward shift on basis functions. For stability correction, the backward shift operator is $\bm{A}_\text{back} = \bm{A}_\Delta e^{\Delta}$ and the forward shift operator is $\bm{A}_\text{for} = \bm{A}_\Delta^{\text{corrected}} = (\bm{A}_\Delta)^{-1} e^{-\Delta}$, such that $\psi_{t \rightarrow t-1, n} = \sum_m (\bm{A}_\text{back})_{nm} \cdot \psi_{t,m}$ and $\psi_{t \rightarrow t+1, n} = \sum_m (\bm{A}_\text{for})_{nm} \cdot \psi_{t,m}$. Figure~\ref{fig:lagged_basis_forward_and_backward} shows both shifts for $n=63$.

The backward shift compresses the basis toward older history, while the forward shift expands it, with amplitude shrinkage reflecting the stabilization correction, consistent with HiPPO's design (noted in Section \ref{sec:theoretical-recover-hippo}).
\begin{figure}[h]
    \centering
    \includegraphics[width=0.9\linewidth]{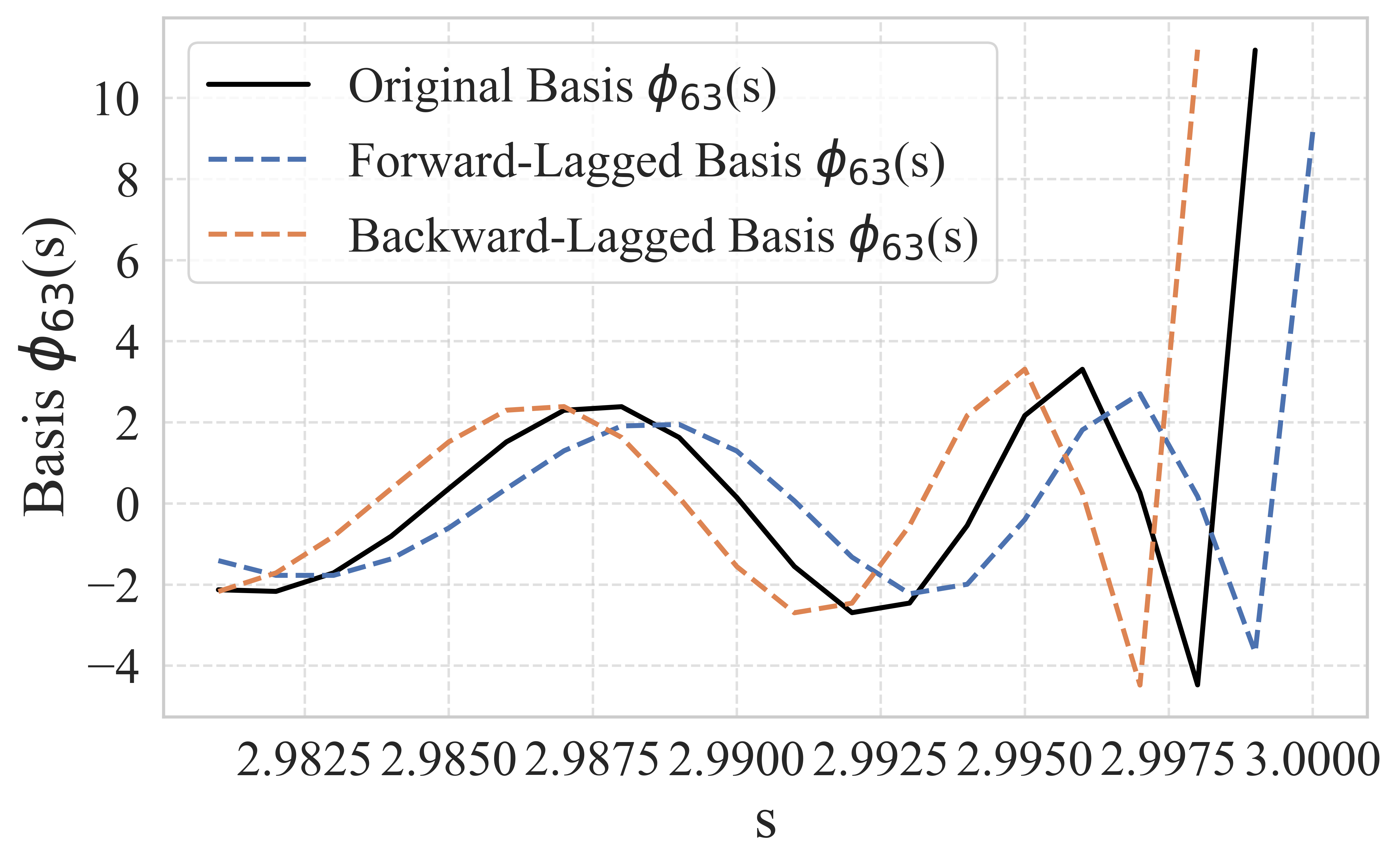}
    \caption{Example basis functions lag-shifted forward and backward using the lag operator $\bm{A}_\Delta$. The backward shift compresses toward old history, while the forward shift expands with amplitude shrinkage due to stabilization. The plot is zoomed near $t = 3.0$ ($\Delta t = 0.001$) to clearly visualize the lag shift; the apparent jaggedness is the expected high-frequency oscillation of high-degree Legendre polynomials at extreme zoom.}
    \label{fig:lagged_basis_forward_and_backward}
\end{figure}

\subsubsection{Evaluating Memory Compression and Forgetting}
We evaluate the quality of our framework's memory compression by visualizing how well the state can reconstruct the signal's history. Crucially, the goal of HiPPO is not to perfectly reconstruct the entire past, but to create a compressed state that optimally represents the \textit{recent} past while gracefully forgetting distant history. The ``reconstructed signal'' shown here is generated by applying the projection defined in eq.~\eqref{eq:recon-u-w-projection} to the final state vector $\bm{c}_T$, which itself is the result of the recurrent update rule. It is therefore a direct visualization of the information stored in this compressed memory.

We compare our Lag Operator HiPPO against the original HiPPO-LegS, using the $x$-component of the chaotic Lorenz63 system as input. Initial conditions were $[1.0, 1.0, 8.0]$; integration used forward Euler with fixed step size $\Delta = 0.01$ (no adaptive tolerances). Only the x-coordinate was used as input $u(t)$. Our reconstruction method uses the derived $\bm{A}_\Delta^{\text{corrected}}$ from eq.\eqref{eq:a-delta-corrected} and the ZOH input vector $\bm{B}_\Delta^{\text{ZOH}}$ from eq.\eqref{eq:define-b-delta-zoh} in Appendix~\ref{app:b-derivations}, while the baseline uses its standard analytical matrices (shown in eq.\eqref{eq:original-a-0-def} and eq.\eqref{eq:b-gen-def}).

Figure~\ref{fig:compare_recon} shows that our model's reconstruction almost perfectly aligns with the original HiPPO-LegS. Both models accurately capture the recent history of the signal (near $t=10$s) while exhibiting the expected exponential decay of information from the distant past (near $t=0$s). The mean squared error (MSE) between the two reconstructions is a negligible $5.76 \cdot 10^{-7}$, confirming that our first-principles framework successfully replicates HiPPO's memory dynamics.
\begin{figure}[h]
    \centering
    \includegraphics[width=\linewidth]{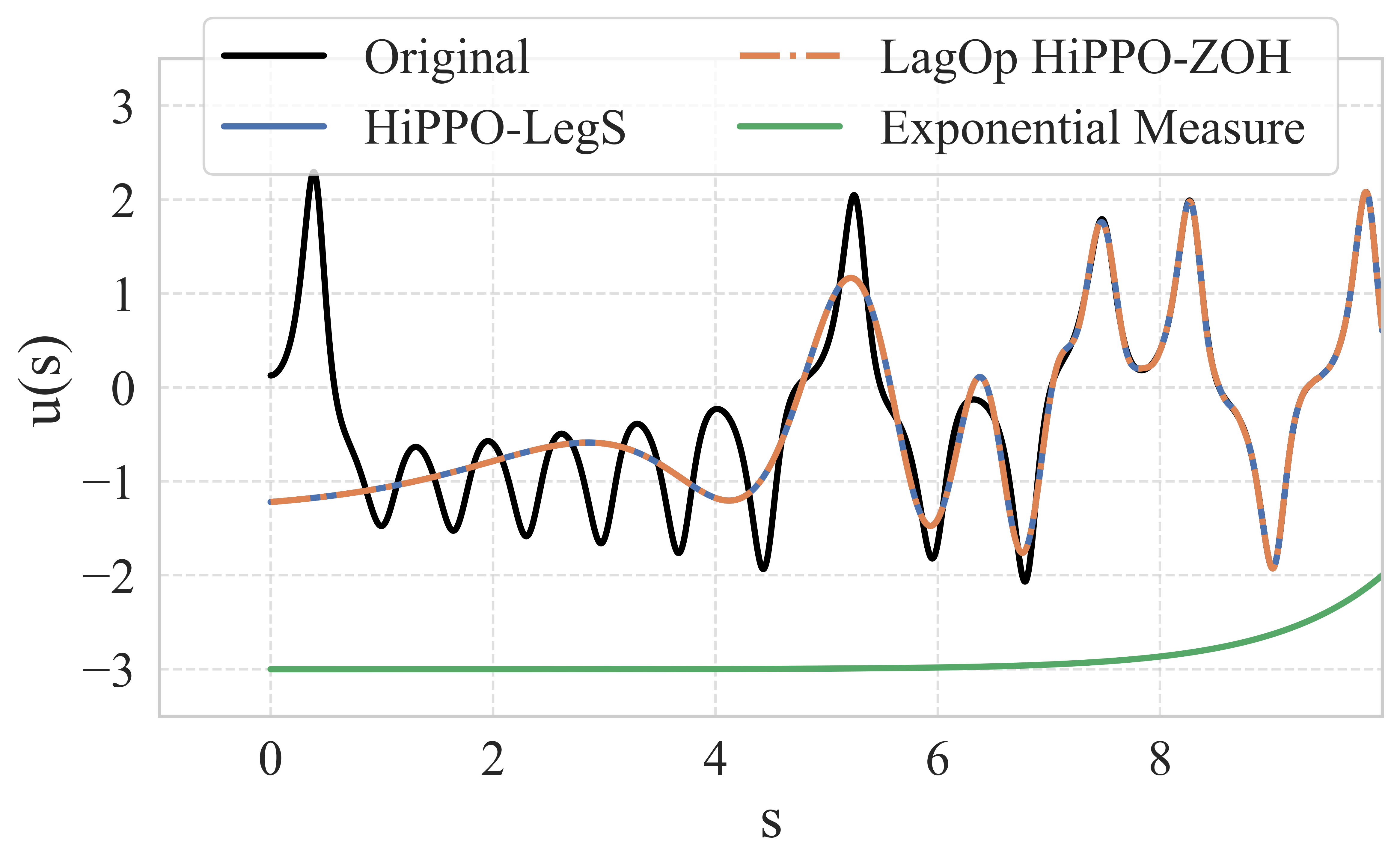}
    \caption{Visualization of the final compressed memory state $\bm{c}_T$ (reconstructed via eq.~\eqref{eq:recon-u-w-projection}) for our Lag Operator model and the HiPPO-LegS baseline. The near-perfect alignment demonstrates that both models accurately capture recent history while forgetting the distant past, a behavior governed by the plotted exponential measure ($\omega_t(s)$), which dictates the memory weighting.}
    \label{fig:compare_recon}
\end{figure}
\section{MODULAR DESIGN: NOVEL SSM  EXAMPLES}

\label{sec:novel-ssm-constructions}

A key advantage of our framework is its modularity. While standard approaches require deriving a new ODE for every change in the measure or basis, our geometric construction allows us to independently select a warping function $\sigma_t$ and a canonical basis $\bm{\Phi}$. The Lag Operator $\sigma_t \circ \sigma_{t+\Delta}^{-1}$ then automatically derives the consistent discrete-time recurrence. In this section, we outline two novel SSM designs enabled by our framework. While we defer the full implementation and downstream benchmarking of these models to future work, we provide these theoretical sketches to demonstrate the flexibility of decoupling the time-warp from the basis.

\subsection{Multi-Resolution Lag-SSM (Fast/Slow Memory)}
Standard HiPPO models enforce a single memory decay rate across the entire state. However, many applications require tracking both long-term trends and fine-grained recent details. Our framework enables the construction of a unified state space with heterogeneous decay rates.

\textbf{Construction:} Construct a hybrid basis set $\bm{\Phi}_{\text{hybrid}}$ on the canonical interval $Z$ that combines ``Fast'' components (e.g., associated with a rapid decay warp $\sigma_t^{(f)}(s) = e^{2(s-t)}$) and ``Slow'' components (e.g., $\sigma^{(s)}_t(s) = e^{0.5(s-t)}$).

\textbf{Geometric Insight:} Rather than running two independent SSMs, we can enforce orthogonality across this entire mixed set using Gram--Schmidt orthogonalization within the canonical domain $Z$. This ensures the state variables are uncorrelated and optimally span the history space. The Lag Operator then acts on this orthogonalized hybrid basis to derive the correct recurrence matrix $\bm{A}_t$. This allows the model to track history at multiple timescales simultaneously within a single coherent state representation.

\subsection{Frequency-Preserving Lag-SSM (Unwarped Oscillations)}
\label{sec:frequency-preserving-lag-ssm-short}
A major limitation of the standard exponential warp is that it ``stretches'' the basis, turning constant frequencies in the $z$-domain into ``chirps'' (variable frequency) in the $s$-domain. This is suboptimal for signals with sustained oscillatory components (e.g., audio, periodic biomedical signals).

\textbf{Construction:} Define $\sigma_t$ as a piecewise-linear map that partitions history (the $s$-domain) into windows of fixed length $L$. Each window maps linearly to a segment of $Z$ with a decaying length $h_n > 0$ (where $\sum h_n = 1$). For hybrid capability, we augment a Fourier basis on $Z$ (for oscillations) with a Legendre subset (for transients), applying Gram--Schmidt orthogonalization. Full mathematical details appear in Appendix~\ref{app:frequency-preserving-details}.

\textbf{Geometric Insight:} This induces a piecewise-constant measure $\omega_t(s) = h_n / L$ on each window. Crucially, this maps the Fourier basis to pure, unwarped sinusoids in the time domain $s$ while preserving orthogonality. The lag operator directly yields the discrete recurrence, bypassing complex ODE derivations despite the discontinuous measure. The result is an SSM implementing a ``sliding-window Fourier transform'' with infinite decaying memory, optionally hybridized for multi-scale dynamics.
\section{DISCUSSION AND FUTURE WORK}

This work introduces a novel, geometrically grounded framework for constructing discrete-time SSMs based on inner-product projections and a unique lag operator. This approach offers several advantages over the established multi-stage paradigm:

\begin{itemize}[nosep]
    \item \textbf{Conceptual Clarity and Unified Memory Control:} Our framework provides a clear, geometric foundation for SSMs. It replaces the complex, multi-stage derivations typical in the HiPPO formulation with a direct projection onto warped bases, where the state transition is intuitively understood as a ``re-warping'' of the signal's compressed history. Crucially, this geometric view unifies memory control into a single component: the time-warp $\sigma_t$. The system's memory profile is dictated by the induced measure $\omega_t = |\sigma'_t|$, which is a direct derivative of the warp. This offers a more straightforward approach than the original HiPPO formulation, which requires a separate ``tilting'' term for stabilization and measure renormalization.

    \item \textbf{A Modular and Interpretable Design Space:} This unified view of memory leads to a powerful and modular design space. The state dynamics are determined by two independent choices: the canonical basis $\bm{\Phi}$ and the time-warping function $\sigma_t$. This decoupling is powerful because, as established, the choice of $\sigma_t$ is a direct and interpretable way to engineer the system's memory via its induced measure. This enables sophisticated designs such as \textbf{multi-resolution memory}, in which distinct memory profiles are assigned to different subsets of a basis. For example, low-order basis functions could be paired with a slow-decaying warp to model long-term trends, while high-order functions use a fast-decaying warp to capture fine-grained, recent details. This allows a single SSM to operate on multiple timescales simultaneously. Furthermore, the ability to make $\sigma_t$ time-dependent provides a natural path toward adaptive, non-stationary dynamics.
\end{itemize}

Our proof-of-concept validation provides a solid foundation for the framework. Numerical results show precise matrix equivalence with HiPPO-LegS, and a low MSE ($5.76 \cdot 10^{-7}$) between the reconstructed memory states of our model and the baseline. This validates that our framework correctly replicates the core function of the HiPPO recurrence, which is to maintain a compressed representation of history, rather than to achieve perfect reconstruction of the entire past.

In practice, this compressed memory state serves as the foundation for downstream components, such as a prediction or classification layer. Furthermore, the framework extends naturally to multivariate inputs by broadcasting the transition matrix $A_t$ across channels (or inserting a mixing layer on inputs/outputs), exactly as in S4 and Mamba. The framework is also robust to the precise discretization of the input; all standard hold assumptions converge to the same continuous-time generator (Appendix~\ref{app:b-discrete-and-continuous-discussion}). By demonstrating that our first-principles framework can produce a memory component that behaves identically to the original, we validate its use as an interchangeable building block. This paves the way for exploring the novel memory designs our framework enables (e.g., the multi-resolution models mentioned above) in future work on downstream tasks. Future research should build on this foundation in several key areas:

\begin{itemize}[nosep]
    \item \textbf{Analysis of Discretization Schemes:} Our experiments successfully validated the framework using a Zero-Order Hold (ZOH) input matrix. An exciting avenue for future work is to systematically explore other input assumptions, such as First-Order Hold (FOH) (see Appendix~\ref{app:b-derivations}), and analyze their resulting stability and scaling properties. This would lead to a deeper understanding of how different signal interpolation methods interact with the system's memory.
    
    \item \textbf{Benchmarking on Downstream Tasks:} Now that the memory component is validated, future efforts should focus on comprehensive benchmarking. This involves integrating the flexible SSMs derived from our framework into larger models and evaluating their performance on standard downstream tasks, such as long-range sequence classification and time-series forecasting, to quantify the benefits of novel memory designs.
\end{itemize}
This paper serves as a foundational step, presenting a novel theoretical perspective that clarifies underlying mechanisms in state-space models and paves the way for more flexible and robust sequence modeling.

\section*{Acknowledgements}
S.~Tomonaga and K.~Doya were supported by the COI-NEXT grant from the Japan Science and Technology Agency (JST). 
N.~Murata was supported by JSPS KAKENHI Grant Numbers JP23H05492 and JP23K24909.

\bibliography{paperpile}

\section*{Checklist}
\begin{enumerate}

  \item For all models and algorithms presented, check if you include:
  \begin{enumerate}
    \item A clear description of the mathematical setting, assumptions, algorithm, and/or model. 
    
    [Yes] Detailed in Section~\ref{sec:our-framework} and Appendix~\ref{app:b-discrete-and-continuous-discussion}.
    
    \item An analysis of the properties and complexity (time, space, sample size) of any algorithm. 
    
    [Yes] An analysis of the framework's mathematical properties is provided theoretically in Section~\ref{sec:theoretical-recover-hippo} and numerically in Section~\ref{sec:numerical-validation}. A formal computational complexity analysis is outside the scope of this paper.

    \item (Optional) Anonymized source code, with specification of all dependencies, including external libraries. 
    
    [Yes]
  \end{enumerate}

  \item For any theoretical claim, check if you include:
  \begin{enumerate}
    \item Statements of the full set of assumptions of all theoretical results. 
    
    [Yes] Stated in Section~\ref{sec:our-framework} and Section~\ref{sec:theoretical-recover-hippo}.
    
    \item Complete proofs of all theoretical results.
    
    [Yes] Details in Appendix~\ref{app:a_gen-derivations} and Appendix~\ref{app:b-discrete-and-continuous-discussion}.
    
    \item Clear explanations of any assumptions. 
    
    [Yes] Assumptions for the proposed framework are stated in Section~\ref{sec:our-framework}. 
  \end{enumerate}

  \item For all figures and tables that present empirical results, check if you include:
  \begin{enumerate}
    \item The code, data, and instructions needed to reproduce the main experimental results (either in the supplemental material or as a URL). 
    
    [Yes] We provide supplementary code.
    
    \item All the training details (e.g., data splits, hyperparameters, how they were chosen). 

    [Yes] Stated in Section~\ref{sec:experimental-setup}.

    \item A clear definition of the specific measure or statistics and error bars (e.g., with respect to the random seed after running experiments multiple times). 
    
    [Yes] We report Mean Squared Error (MSE) for signal reconstruction. As our experiment is a deterministic validation of a theoretical derivation, results are consistent across runs, making error bars not applicable for this analysis.

    \item A description of the computing infrastructure used. (e.g., type of GPUs, internal cluster, or cloud provider). 
    
    [Not Applicable] The numerical validation involves standard matrix operations that are not computationally intensive and can be reproduced on any modern personal computer.
  
  \end{enumerate}

  \item If you are using existing assets (e.g., code, data, models) or curating/releasing new assets, check if you include:
  \begin{enumerate}
    \item Citations of the creator If your work uses existing assets. 
    
    [Yes] Mentioned in Section~\ref{sec:experimental-setup}
    
    \item The license information of the assets, if applicable. 
    
    [Yes] We cite the original papers and provide a footnote in Section~\ref{sec:experimental-setup} linking to the public source code repository for the assets used.
    
    \item New assets either in the supplemental material or as a URL, if applicable. 
    
    [Yes] Code provided in supplemental material.

    \item Information about consent from data providers/curators. 

    [Not Applicable] We use an artificial dataset. (see Section~\ref{sec:experimental-setup}.
    
    \item Discussion of sensible content if applicable, e.g., personally identifiable information or offensive content. 

    [Not Applicable] We use an artificial dataset from a dynamical model. (see Section~\ref{sec:experimental-setup}.)
  \end{enumerate}

  \item If you used crowdsourcing or conducted research with human subjects, check if you include:
  \begin{enumerate}
    \item The full text of instructions given to participants and screenshots. 
    
    [Not Applicable] Did not use crowdsourcing nor research human subjects.

    \item Descriptions of potential participant risks, with links to Institutional Review Board (IRB) approvals if applicable. 
    
    [Not Applicable] Did not use crowdsourcing nor research human subjects.

    \item The estimated hourly wage paid to participants and the total amount spent on participant compensation. 

    [Not Applicable] Did not use crowdsourcing nor research human subjects.
  \end{enumerate}

\end{enumerate}

\newpage

\onecolumn
\appendix
\aistatstitle{Appendix}


\section{NOTATION SUMMARY FOR DISCRETE-TO-CONTINUOUS MATRICES}
\label{app:matrix-summary}

\begin{table}[H]
\centering
\caption{Summary of key matrices in the discrete-to-continuous limit.}
\begin{tabular}{l p{6cm} l}
    \toprule
    Matrix & Description & Relation \\
    \midrule
    $\bm{A}_t$, $\bm{B}_t$ & Time-varying discrete transition and input (unit step, eq.~\eqref{eq:ssm-recurrence}) & General case; $\bm{A}_t = \langle \bm{\Phi}, \bm{\Phi} \circ \sigma_t \circ \sigma_{t+1}^{-1} \rangle$ \\
    $\bm{A}_\Delta$, $\bm{B}_\Delta$ & Step-size-dependent discrete (arbitrary $\Delta$) & $\bm{A}_\Delta \approx \bm{I} + \Delta \bm{A}_{\text{gen}}$; $\bm{B}_\Delta \approx \Delta \bm{B}_{\text{gen}}$ \\
    $\bm{A}_{\text{gen}}$ & Continuous state generator & $\bm{A}_{\text{HiPPO}} = -(\bm{A}_{\text{gen}} + \bm{I})^\mathsf{T}$ (tilted/transposed) \\
    $\bm{B}_{\text{gen}}$ & Continuous input generator & Matches HiPPO directly: $(\bm{B}_{\text{gen}})_n = \phi_n(1) f'(0)$ \\
    $\bm{A}_{\text{HiPPO}}$ & Stable HiPPO-LegS matrix (from~\citep{Gu2020-de}) & Derived as special case of $\bm{A}_{\text{gen}}$ \\
    \bottomrule
\end{tabular}
\label{tab:matrix-summary}
\end{table}

\section{PROOF OF PROPOSITION~\ref{prop:a-gen}: CONTINUOUS GENERATOR $\bm{A}_{\text{gen}}$}
\label{app:a_gen-derivations}
This derivation assumes the stationary warping and continuous limit as summarized in Section~\ref{sec:theoretical-recover-hippo}.

\begin{proposition}[Proposition~\ref{prop:a-gen} Restated: Continuous Generator $\bm{A}_{\text{gen}}$]
\label{prop:a-gen-restated}
For a stationary warping $\sigma_t(s) = f(s - t)$ with inverse $g = f^{-1}$, the generator matrix is
\begin{equation}
(\bm{A}_{\text{gen}})_{nm} = \left\langle \phi_n, \frac{\phi'_m}{g'} \right\rangle = \int_Z \phi_n(z) \frac{\phi'_m(z)}{g'(z)} \, dz.
\end{equation}
\end{proposition}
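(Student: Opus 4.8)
The plan is to read off $\bm{A}_{\text{gen}}$ as the first-order term in the $\Delta$-expansion of the step-$\Delta$ transition matrix $\bm{A}_\Delta$. First I would fix the integral representation: generalizing eq.~\eqref{eq:define-a-as-lag} to an arbitrary step $\Delta$ and changing variables $z = \sigma_{t+\Delta}(s)$ in the induced inner product $\langle\cdot,\cdot\rangle_{\omega_{t+\Delta}}$ gives
\begin{equation}
(\bm{A}_\Delta)_{nm} = \langle \phi_n, \phi_m\circ\ell_{t+\Delta\to t}\rangle = \int_Z \phi_n(z)\,\phi_m\!\big(\ell_{t+\Delta\to t}(z)\big)\,dz ,
\end{equation}
where for stationary warping $\sigma_t(s) = f(s-t)$ the backward lag operator is the $t$-independent map $\ell_{t+\Delta\to t}(z) = \sigma_t\circ\sigma_{t+\Delta}^{-1}(z) = f(\Delta + g(z))$ (Section~\ref{sec:long-term-behavior}). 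I would make explicit at the outset that the integral is over the \emph{fixed} canonical interval $Z$, with $\phi_m$ read as its natural extension beyond $Z$ (the polynomial itself, in the Legendre instance) --- this is the reading under which the lag-operator identity and the HiPPO equivalence hold, corresponding to re-warping the stored approximation $\hat{U}_t$ as a function rather than truncating a signal, and it is what keeps the integration limits independent of $\Delta$. At $\Delta = 0$ the map is $\ell_{t\to t}(z) = f(g(z)) = z$, so $\bm{A}_0 = \bm{I}$, and $\bm{A}_{\text{gen}} = \lim_{\Delta\to0}(\bm{A}_\Delta - \bm{I})/\Delta$ is the derivative $\tfrac{d}{d\Delta}\big|_{\Delta=0}\bm{A}_\Delta$.

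Next I would differentiate entrywise under the integral sign. With the limits fixed, only the integrand depends on $\Delta$, through $\phi_m\!\big(f(\Delta+g(z))\big)$, whose $\Delta$-derivative is $\phi_m'\!\big(f(\Delta+g(z))\big)\,f'\!\big(\Delta+g(z)\big)$ by the chain rule. Evaluating at $\Delta = 0$ and using $f(g(z)) = z$ together with the inverse-function identity $f'(g(z)) = 1/g'(z)$, the derivative of the integrand collapses to $\phi_n(z)\,\phi_m'(z)/g'(z)$, so
\begin{equation}
(\bm{A}_{\text{gen}})_{nm} = \int_Z \phi_n(z)\,\frac{\phi_m'(z)}{g'(z)}\,dz = \Big\langle \phi_n, \tfrac{\phi_m'}{g'}\Big\rangle ,
\end{equation}
as claimed. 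An equivalent route, which I would mention in a line, differentiates the coefficients $c_n(t) = \langle\phi_n, u\circ\sigma_t^{-1}\rangle$ directly: for stationary warping $\sigma_t^{-1}(z) = t+g(z)$, so $\partial_t$ hits $u$ at $t+g(z)$ with unit weight, and a single integration by parts in $z$ --- with $u$ replaced by its projection $\hat{u}_t = \sum_m c_{t,m}\,\phi_m\circ\sigma_t$ --- reproduces the same kernel, the boundary term furnishing the $\bm{B}$ part rather than $\bm{A}$.

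The step I expect to be the main obstacle has a conceptual and a technical side. Conceptually, one must commit to the fixed-interval reading above: if instead one integrated only over the warped image of the old window $(-\infty,t]$, the upper limit would be $\Delta$-dependent and Leibniz's rule would add a spurious boundary term $-\phi_n(z_{\max})\phi_m(z_{\max})f'(0)$; I would argue that the bookkeeping consistent with the recurrence (treating $\bm{c}_t$ as coefficients of a function, not of a window-truncated signal) uses the fixed $Z$, so this term is absent --- which is precisely why the clean formula holds. Technically, one must justify exchanging the $\Delta$-derivative with the integral: this needs a local domination bound on $\phi_m'\!\big(f(\Delta+g(z))\big)\,f'\!\big(\Delta+g(z)\big)$ uniform for small $\Delta$, which follows from mild regularity ($f\in C^1$, $f'>0$, hence $g'>0$) plus control of $\phi_m'/g'$ near $\partial Z$ --- in particular near $z=0$ for $Z=(0,1]$, where $g'$ can blow up. For the smooth strictly monotone warps and polynomial bases treated here this is routine: for the exponential warp $g'(z) = 1/z$, so $\phi_m'/g' = z\,\phi_m'(z)$ is bounded on $(0,1]$. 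I would state these hypotheses rather than chase maximal generality, and note that the same expansion supplies the $O(\Delta^2)$ remainder invoked in Section~\ref{sec:theoretical-recover-hippo}; the tilting shift and transposition relating $\bm{A}_{\text{gen}}$ to $\bm{A}_{\text{HiPPO}}$ are a separate step not needed here.
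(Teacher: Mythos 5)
Your proposal follows essentially the same route as the paper's proof in Appendix~\ref{app:a_gen-derivations}: both compute the backward lag operator $\ell_{t+\Delta\to t}(z) = f(\Delta+g(z))$ for stationary warping and extract $\bm{A}_{\text{gen}}$ as the first-order Taylor coefficient of $\phi_m\big(f(\Delta+g(z))\big)$ at $\Delta=0$ via the chain rule together with the inverse-function identity $f'(g(z)) = 1/g'(z)$. The extra care you take with keeping the integration domain fixed and with justifying differentiation under the integral sign goes beyond what the paper records but does not alter the argument.
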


The derivation follows the backward-lag convention from eq.\eqref{eq:define-a-as-lag} with $\sigma_t \circ \sigma_{t+\Delta}^{-1}$.

First, find the inverse $\sigma_t^{-1}$ by solving $\sigma_t(s) = f(s-t)$: $s = t + g(z)$, so $\sigma_t^{-1}(z) = t + g(z)$.

Next, compute $\sigma_t \circ \sigma_{t{+}\Delta}^{-1}$:
\begin{equation}
    \sigma_t \circ \sigma_{t+\Delta}^{-1} = f(\Delta + g(z)).
\end{equation}
To obtain $\bm{A}_t \approx \bm{I} + \Delta \bm{A}_{\text{gen}}$, evaluate the first-order Taylor expansion of $\phi_m(\sigma_t \circ \sigma_{t+\Delta}^{-1}(z))$ at $\Delta = 0$:
\begin{equation*}
    \begin{aligned}
        \text{Generator}(\phi_m)
        &= \left. \frac{d}{d\Delta} \phi_m \Big( f \big( \Delta + g(z) \big) \Big) \right|_{\Delta = 0} \\
        &= \left. \phi'_m \Big( f \big( \Delta + g(z) \big) \Big)
            f' \big( \Delta + g(z) \big) \right|_{\Delta = 0}.
    \end{aligned}
\end{equation*}
At $\Delta = 0$, using $f(g(z)) = z$, this is $\phi'_m(z) \cdot f'(g(z))$. Thus,
\begin{equation}
    (\bm{A}_{\text{gen}})_{nm} = \langle \phi_n, \phi'_m \cdot (f' \circ g) \rangle = \Big\langle \phi_n, \frac{\phi'_m}{g'} \Big\rangle.
\end{equation}
\qed

\section{DERIVATION OF INPUT VECTOR $\bm{B}$ AND PROOF OF PROPOSITION~\ref{prop:b-gen}}
\label{app:b-discrete-and-continuous-discussion}
\subsection{The Input Vector $\bm{B}$ and Different Hold Assumptions}
\label{app:b-derivations}
Here, we derive $\bm{B}_\Delta$ and its limit to $\bm{B}_{\text{gen}}$ under different hold assumptions, consistent with the continuous ODE $\bm{c}'(t) = \bm{A}_{\text{gen}} \bm{c}(t) + \bm{B}_{\text{gen}} u(t)$ (see Section~\ref{sec:theoretical-recover-hippo}).

In the main text, the new input is modeled generally, leading to the recurrence $\bm{c}_{t{+}1} = \bm{A}_t \bm{c}_t + \bm{B}_t u_{t{+}1}$. The specific form of the discrete input matrix $\bm{B}_\Delta$ depends on the assumption made about the signal's behavior over the interval $[t, t+\Delta]$. Here, we derive $\bm{B}_\Delta$ for three common models under a stationary warp $\sigma_t(s) = f(s-t)$, where $f$ is increasing, $f(0)=1$, and $g=f^{-1}$.

The $n$-th component of the input matrix $(\bm{B}_\Delta)_n$ represents the full contribution of the input to the state update, computed as the projection of the input shape, $\text{Input}(s; u_{t+\Delta})$, onto the basis at time $t{+}\Delta$:
\begin{equation}
\begin{aligned}
    (\bm{B}_\Delta)_n 
    &= \langle \psi_{t+\Delta,n}, \text{Input}(s; u_{t+\Delta}) \rangle_{\omega_{t+\Delta}} \\
    &= \int_{-\infty}^{t+\Delta} \psi_{t+\Delta,n}(s) \cdot \text{Input}(s; u_{t+\Delta}) \cdot \omega_{t+\Delta}(s) \, ds.
\end{aligned}
\end{equation}
For stationary warping $\sigma_{t+\Delta}(s) = f(s - t - \Delta)$ with the inverse $g = f^{-1}$, substituting $s - t = \Delta + g(z)$ into the input and performing a change of variables, the inner product simplifies to:
\begin{equation}
    (\bm{B}_\Delta)_n = \int_{f(-\Delta)}^{1} \phi_n(z) \cdot \text{Input}(t + \Delta + g(z); u_{t+\Delta}) \, dz,
\end{equation}
assuming $f$ maps the interval appropriately (limits adjusted for the hold).

Note that for Dirac and ZOH, $\text{Input}(s; u_{t+\Delta})$ is defined as a u-independent shape scaled by $u_{t+\Delta}$, so the full contribution is $\bm{B}_\Delta u_{t+\Delta}$; for FOH, the input shape embeds the linear interpolation of $u_t$ and $u_{t+\Delta}$, which we factor to isolate the matrix component.

\subsubsection{Dirac Delta Input (Impulse)}
This model assumes the input is an instantaneous impulse at the boundary $s = t+\Delta$, simplifying analysis but approximating discrete semantics in the limit.
\begin{equation}
    \text{Input}(s; u_{t+\Delta}) = \delta(s - (t+\Delta)).
\end{equation}
The projection simplifies to a point evaluation of the warped basis function:
\begin{equation}
\begin{aligned}
    (\bm{B}^\delta_\Delta)_n 
    &= \langle \psi_{t+\Delta,n}, \text{Input}(s; u_{t+\Delta}) \rangle_{\omega_{t+\Delta}} \\
    &= \int_{-\infty}^{t+\Delta} \psi_{t+\Delta,n}(s) \, \delta(s - (t+\Delta)) \, \omega_{t+\Delta}(s) \, ds \\
    &= \psi_{t+\Delta,n}(t+\Delta) \, \omega_{t+\Delta}(t+\Delta) \\
    &= \phi_n(\sigma_{t+\Delta}(t+\Delta)) \, \omega_{t+\Delta}(t+\Delta) \\
    &= \phi_n(1) \, |f'(0)|,
\end{aligned}
\end{equation}
assuming $f$ is increasing so $|f'(0)| = f'(0)$. Since this is a boundary evaluation, $(\bm{B}^\delta_\Delta)_n$ is constant with respect to $\Delta$. The full input contribution is then $(\bm{B}^\delta_\Delta)_n u_{t+\Delta}$.

\subsubsection{Zero-Order Hold (ZOH) Input}
\label{app:zoh-def}
This model assumes the input is held constant over the interval, representing a piecewise-constant signal, which more accurately captures discrete-time semantics.
\begin{equation}
    \text{Input}(s; u_{t+\Delta}) = \bm{1}_{[t,t+\Delta]}(s).
\end{equation}
Applying the change of variables $z = \sigma_{t+\Delta}(s)$, the integration limits become $[f(-\Delta), 1]$, yielding:
\begin{equation}
    \label{eq:define-b-delta-zoh}
    (\bm{B}^{\text{ZOH}}_\Delta)_n = \int_{f(-\Delta)}^1 \phi_n(z) \, dz.
\end{equation}
High-order basis functions ($\phi_n(z)$ for large $n$) oscillate near boundaries, so this integral averages behavior over the interval, scaled by $u_{t+\Delta}$.

\subsubsection{First-Order Hold (FOH) Input}
\label{app:foh-def}
This model assumes the input is interpolated linearly from $u_t$ to $u_{t+\Delta}$. The input shape is a linear ramp. To isolate the matrix component (linear in $u_{t+\Delta}$ as per main text), we factor accordingly:
\begin{equation}
    \text{Input}(s; u_{t+\Delta}) = \bm{1}_{[t,t+\Delta]}(s) \cdot \left[ u_t + \frac{u_{t+\Delta} - u_t}{\Delta} (s-t) \right].
\end{equation}
After the change of variables where $s - t = \Delta + g(z)$, the projection becomes:
\begin{equation}
\begin{aligned}
    (\bm{B}^{\text{FOH}}_\Delta)_n 
    &= \int_{f(-\Delta)}^{1} \phi_n(z) \left[ u_t + \frac{u_{t+\Delta} - u_t}{\Delta} (\Delta + g(z)) \right] dz \\
    &= \int_{f(-\Delta)}^{1} \phi_n(z) \left[ u_{t+\Delta} + \frac{u_{t+\Delta} - u_t}{\Delta} g(z) \right] dz \\
    &= u_{t+\Delta} I_1(\Delta) + (u_{t+\Delta} - u_t) \frac{I_g(\Delta)}{\Delta},
\end{aligned}
\end{equation}
where 
\begin{equation}
\begin{aligned}
    I_1(\Delta) &= \int_{f(-\Delta)}^{1} \phi_n(z) \, dz, \\
    I_g(\Delta) &= \int_{f(-\Delta)}^{1} \phi_n(z) g(z) \, dz.
\end{aligned}
\end{equation}
The $u_t$ term contributes to the higher-order correction, which vanishes in the limit (see below).

\subsection{Proof of Proposition~\ref{prop:b-gen}: Convergence to the Continuous Generator $\bm{B}_{\text{gen}}$}
\label{app:b-gen-proof}
While the discrete matrices differ, we now show that the ZOH and FOH models converge to the same continuous generator $\bm{B}_{\rm gen}$, consistent with the Dirac case. This demonstrates that for small $\Delta$, the system's dynamics are insensitive to the hold order.

To obtain the generator, assume $u(s)$ is a smooth continuous function, so $u_t = u(t)$ and $u_{t+\Delta} = u(t + \Delta)$.

In the continuous limit, the SSM ODE is $\bm{c}'(t) = \bm{A}_{\rm gen} \bm{c}(t) + \bm{B}_{\rm gen} u(t)$, so the discrete approximation is $\bm{c}_{t+\Delta} \approx \bm{c}_t + \Delta [\bm{A}_{\rm gen} \bm{c}_t + \bm{B}_{\rm gen} u(t)] + O(\Delta^2)$. Thus, the full input contribution over $\Delta$ should be $\bm{B}_\Delta u_{t+\Delta} \approx \Delta \bm{B}_{\rm gen} u(t)$, and
\begin{equation}
    \bm{B}_{\rm gen} u(t) = \lim_{\Delta \to 0} \frac{\bm{B}_\Delta u_{t+\Delta}}{\Delta}.
\end{equation}
Since $\bm{B}_\Delta$ is the coefficient multiplying $u(t+\Delta)$, we isolate $\bm{B}_{\rm gen}$ directly by
\begin{equation}
    (\bm{B}_{\rm gen})_n = \lim_{\Delta \to 0} \frac{(\bm{B}_\Delta)_n}{\Delta}.
\end{equation}

\begin{proposition}[Proposition~\ref{prop:b-gen} Restated: Continuous Input Generator]
\label{prop:b-gen-restated}
Let the discrete input matrix $\bm{B}_\Delta$ be the coefficient such that the state update includes the term $\bm{B}_\Delta u(t+\Delta)$. In the continuous limit, the generator $\bm{B}_{\rm gen}$ is given by:
\begin{equation}
    (\bm{B}_{\rm gen})_n = \lim_{\Delta \to 0} \frac{(\bm{B}_\Delta)_n}{\Delta} = \phi_n(1) f'(0).
\end{equation}
\end{proposition}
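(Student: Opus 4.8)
The plan is to compute the limit directly for each hold model using the explicit integral expressions for $\bm{B}_\Delta$ derived in Appendix~\ref{app:b-derivations}, and then to show that the extra terms peculiar to the FOH ramp vanish at the required order. Throughout I use that the stationary warp $f$ is increasing with $f(0)=1$ — hence its inverse satisfies $g(1)=0$ — and that $f$ is differentiable at $0$ with $f'(0)>0$ (exactly the quantity appearing in the claim), so that $g$ is differentiable near $z=1$.

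\textbf{Step 1 (ZOH).} From eq.~\eqref{eq:define-b-delta-zoh} the full input contribution has $n$-th component $u_{t+\Delta}\int_{f(-\Delta)}^1 \phi_n(z)\,dz$. Set $F(\Delta)=\int_{f(-\Delta)}^1 \phi_n(z)\,dz$; since $f(0)=1$ we have $F(0)=0$, so $\lim_{\Delta\to0}F(\Delta)/\Delta = F'(0)$ by the definition of the derivative. Differentiating the moving lower limit, $F'(\Delta) = \phi_n\big(f(-\Delta)\big)\,f'(-\Delta)$ — the two sign flips, one from differentiating a lower limit and one from the chain rule on $-\Delta$, cancel — so $F'(0)=\phi_n(1)f'(0)$. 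Combining with $u_{t+\Delta}\to u(t)$ (continuity) gives $\lim_{\Delta\to0}(\bm{B}^{\text{ZOH}}_\Delta)_n\,u_{t+\Delta}/(\Delta\,u(t)) = \phi_n(1)f'(0)$.

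\textbf{Step 2 (FOH).} From Appendix~\ref{app:b-derivations}, $(\bm{B}^{\text{FOH}}_\Delta)_n = u_{t+\Delta} I_1(\Delta) + (u_{t+\Delta}-u_t)\,I_g(\Delta)/\Delta$ with $I_1=F$ as above and $I_g(\Delta)=\int_{f(-\Delta)}^1 \phi_n(z)\,g(z)\,dz$. The $I_1$ term is handled exactly as in Step 1 and contributes $\phi_n(1)f'(0)$ after division by $\Delta\,u(t)$. For the second term, the key observation is that the integrand $\phi_n(z)g(z)$ vanishes at $z=1$ because $g(1)=0$; since the interval $[f(-\Delta),1]$ has length $1-f(-\Delta)=f'(0)\Delta+o(\Delta)=O(\Delta)$ and $|g(z)|\le C|z-1|$ near $1$ while $\phi_n$ is bounded on the canonical interval, one gets $|I_g(\Delta)|\le \|\phi_n\|_\infty\, C\,(1-f(-\Delta))^2 = O(\Delta^2)$. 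Together with $u_{t+\Delta}-u_t\to0$, the correction term divided by $\Delta\,u(t)$ is $O(\Delta^2)\cdot o(1)/\Delta^2 = o(1)\to0$. Hence the FOH limit is also $\phi_n(1)f'(0)$, and since this matches the Dirac boundary evaluation already computed in Appendix~\ref{app:b-derivations}, the proposition follows.

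The main obstacle is Step 2: one must recognize that the FOH ramp correction is suppressed not merely by the shrinking integration window (which alone would give only $O(\Delta)$, surviving the $1/\Delta$ rescaling) but by the simultaneous vanishing of $g$ at the right endpoint $z=1$, which upgrades the estimate to $O(\Delta^2)$ and kills the term. The secondary point of care is the sign bookkeeping when differentiating the moving lower limit $f(-\Delta)$; getting that wrong would flip the sign of $\bm{B}_{\text{gen}}$.
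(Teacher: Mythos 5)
Your proof is correct and follows essentially the same route as the paper's: the same decomposition into $I_1(\Delta)$ and $I_g(\Delta)$, the boundary derivative $\phi_n(1)f'(0)$ of the moving lower limit for ZOH, and the vanishing of the FOH ramp correction via $g(1)=0$. Your only deviation is that you establish $I_g(\Delta)=O(\Delta^2)$ by a direct integral bound (interval length $O(\Delta)$ times $|g(z)|\le C|z-1|$) rather than by noting that the Leibniz derivative of $I_g$ vanishes at $\Delta=0$ as the paper does — a justification that is, if anything, slightly more rigorous, since the derivative argument alone only yields $o(\Delta)$ without assuming second-order smoothness.
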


\begin{proof}
The proof relies on analyzing the first-order behavior of the ZOH and FOH integrals as $\Delta \to 0$. Recall from Section~\ref{app:b-derivations} (ZOH and FOH definitions) that 
    \begin{equation*}
    \begin{aligned}
        I_1(\Delta) &= \int_{f(-\Delta)}^{1} \phi_n(z) \, dz, \\
        I_g(\Delta) &= \int_{f(-\Delta)}^{1} \phi_n(z) g(z) \, dz.
    \end{aligned}
    \end{equation*}
    
For \textbf{ZOH} (see eq.~\eqref{eq:define-b-delta-zoh} in Section~\ref{app:zoh-def}), the matrix is $(\bm{B}^{\text{ZOH}}_\Delta)_n = I_1(\Delta)$, and the full contribution is $I_1(\Delta) u_{t+\Delta}$.

First, we find the first-order approximation of these integrals. Using the Leibniz integral rule on $I_1(\Delta)$:
\begin{equation}
    \frac{d}{d\Delta} I_1(\Delta) = -\phi_n(f(-\Delta)) \cdot f'(-\Delta) \cdot (-1).
\end{equation}
Evaluating at $\Delta=0$ (and using $f(0)=1, g(1)=0$):
\begin{equation}
    \left. \frac{d}{d\Delta} I_1(\Delta) \right|_{\Delta=0} = \phi_n(1) f'(0).
\end{equation}
This implies the first-order Taylor expansion is $I_1(\Delta) = \Delta \cdot \phi_n(1)f'(0) + O(\Delta^2)$. \footnote{We note that while the Leibniz rule provides a direct path to the boundary derivative, we frame the analysis via first-order Taylor expansion here to emphasize the intuitive linear limit consistent with the main text's generator derivation.}

Similarly for $I_g(\Delta)$:
\begin{equation}
    \left. \frac{d}{d\Delta} I_g(\Delta) \right|_{\Delta=0} = \phi_n(1) g(1) f'(0) = \phi_n(1) \cdot 0 \cdot f'(0) = 0.
\end{equation}
This implies $I_g(\Delta) = O(\Delta^2)$, so it vanishes faster than first order.

Now, we apply these limits. For \textbf{ZOH}:
\begin{equation}
    \lim_{\Delta \to 0} \frac{(\bm{B}^{\text{ZOH}}_\Delta)_n u_{t+\Delta}}{\Delta} = \lim_{\Delta \to 0} \frac{I_1(\Delta) u_{t+\Delta}}{\Delta} \approx \lim_{\Delta \to 0} \frac{\Delta \cdot \phi_n(1)f'(0) u(t)}{\Delta} = \phi_n(1) f'(0) u(t),
\end{equation}
using $u_{t+\Delta} \to u(t)$ by continuity. Thus $(\bm{B}_{\rm gen})_n = \phi_n(1) f'(0)$.

For \textbf{FOH} (see Section~\ref{app:foh-def} for the FOH derivation), since $u$ is continuous, $u_{t+\Delta} \to u(t)$ and $u_{t+\Delta} - u_t \to 0$ as $\Delta \to 0$. The full projection is:
\begin{equation}
(\bm{B}^{\text{FOH}}_\Delta)_n = u_{t+\Delta} I_1(\Delta) + (u_{t+\Delta} - u_t) \frac{I_g(\Delta)}{\Delta}.
\end{equation}
Dividing by $\Delta$ and taking the limit:
\begin{equation}
\begin{aligned}
    \lim_{\Delta \to 0} \frac{(\bm{B}^{\text{FOH}}_\Delta)_n}{\Delta} &= \lim_{\Delta \to 0} \left[ u_{t+\Delta} \frac{I_1(\Delta)}{\Delta} + (u_{t+\Delta} - u_t) \frac{I_g(\Delta)}{\Delta^2} \right] \\
  &= u(t)[\phi_n(1)f'(0)] + \left( \lim_{\Delta \to 0} (u_{t+\Delta} - u_t) \right) \cdot \left( \lim_{\Delta \to 0} \frac{I_g(\Delta)}{\Delta^2} \right) \\
  &= u(t)[\phi_n(1)f'(0)] + 0 \cdot \text{const.} \\
  &= \phi_n(1)f'(0) u(t),
\end{aligned}
\end{equation}
Thus $(\bm{B}_{\rm gen})_n = \phi_n(1) f'(0)$, matching ZOH and the Dirac boundary case.

The consistency arises because the $u_t$ term in FOH contributes only to higher-order terms (e.g., via $I_g(\Delta)$), which vanish in the $\Delta \to 0$ limit, leaving the $u_{t+\Delta} \approx u(t)$ term dominant.
\end{proof}

\subsection{Conclusion for HiPPO-LegS}
For the HiPPO-LegS special case ($f(x)=e^x$, $f'(0)=1$, and $\phi_n(z)$ being normalized Legendre polynomials on $[0,1]$), all three input models are consistent in the continuous limit, yielding the well-known generator:
\begin{equation}
    (\bm{B}_{\text{gen}})_n = \phi_n(1) \cdot 1 = \sqrt{2n+1},
\end{equation}
where the full contribution is $(\bm{B}_{\text{gen}})_n u(t) = \sqrt{2n+1} u(t)$.

\section{GEOMETRIC ILLUSTRATION OF THE LAG OPERATOR AND \textit{DOMAIN EXPANSION}}
\label{app:domain-expansion}

The lag operator \(\sigma_t \circ \sigma_{t+1}^{-1}\) geometrically encodes the \emph{domain expansion} of the structured projection basis when time advances from \(t\) to \(t+1\). Figure~\ref{fig:domain-expansion} visualizes this process for a generic warping function \(\sigma_t\).

\begin{figure}[h]
    \centering
    \begin{tikzpicture}[
    font=\sffamily,
    >=Stealth, 
    declare function={
      random_signal(\x) = 0.3*sin(5*\x r) + 0.3*sin(8*\x r + 1) + 0.1*sin(12*\x r + 2) + 0.7;
      scale_range(\x) = \x / (1.5);
      time_warping(\x) = (ln(\x + 1)/2)/0.2 + 0.5;
      time_warping_inverse(\x) = 2*exp(\x - 1) - 1;
      weight_constant(\x) = 0.5;
      weight_exponential(\x) = 2*exp(\x) - 1;
      L_0(\x) = 1 * 0.5 + 0.5;
      L_1(\x) = \x  * 0.5 + 0.5;
      L_2(\x) = (3*(\x)^2 - 1)/2  * 0.5 + 0.5;
      L_3(\x) = (5*(\x)^3 - 3*(\x))/2) * 0.5 + 0.5;
    }
]

\begin{scope}[yshift=2cm, xshift=0cm]
    \draw[black] (1.5, 0) -- (1.5, 0.8); 
    \fill[black] (1.5, 0.8) circle (2pt); 
    \node[black, above, font=\large] at (1.5, 0.8) {${u}_{t+1}$};
    \node[black, right, font=\Large] at (2., 0.6) {$\tilde{u}_{t+1}(s)$};

    \draw[black, thick, domain=-4:1.0, samples=401] 
        plot (\x, {random_signal(\x)});
    \draw[black] (1.0, 0) -- (1.0, 0.65); 
    \node[black, above, font=\large] at (-0.5, 1.0) {$\hat{u}_t(s)$};
   
    \draw[->] (-4, 0) -- (2, 0) node[right, font=\large] {$s$};
    \node[anchor=west, font=\large] at (-5, 0) {$-\infty$};
\end{scope}

\begin{scope}[yshift=0cm, xshift=0cm]
    \draw[->] (-4, 0) -- (2, 0) node[right, font=\large] {$s$};
    \node[anchor=west, font=\large] at (-5, 0) {$-\infty$};
    \node[below, font=\large] at (1., -0.15) {$t$};
    \draw (1., -0.15) -- (1., 0.15);
    \draw[dashed] (1, 2) -- (1, 0);
    
    \draw[black, thick, domain=-4:1, samples=401, dashed] 
        plot (\x, {L_0(time_warping_inverse(\x))});
    \draw[black, thick, domain=-4:1, samples=401, dashed] 
        plot (\x, {L_1(time_warping_inverse(\x))});
    \draw[black, thick, domain=-4:1, samples=401, dashed] 
        plot (\x, {L_2(time_warping_inverse(\x))});
    \draw[black, thick, domain=-4:1, samples=401, dashed] 
        plot (\x, {L_3(time_warping_inverse(\x))});
    \node[black, right, font=\Large] at (1.7, 1.2) {$\psi_{t,n}(s)$};
\end{scope}

\begin{scope}[yshift=0cm, xshift=0cm]
    \draw[->] (-4, 0) -- (2, 0) node[right, font=\large] {$s$};
    \node[anchor=west, font=\large] at (-5, 0) {$-\infty$};
    
    \node[below, font=\large] at (1.8, -0.15) {$t+1$};
    \draw (1.5, -0.15) -- (1.5, 0.15);
    \draw[dashed] (1.5, 2) -- (1.5, 0);
    
    \draw[red, thick, domain=-4:1.5, samples=401] 
        plot (\x, {L_0(time_warping_inverse(scale_range(\x)))});
    \draw[red, thick, domain=-4:1.5, samples=401] 
        plot (\x, {L_1(time_warping_inverse(scale_range(\x)))});
    \draw[red, thick, domain=-4:1.5, samples=401] 
        plot (\x, {L_2(time_warping_inverse(scale_range(\x)))});
    \draw[red, thick, domain=-4:1.5, samples=401] 
        plot (\x, {L_3(time_warping_inverse(scale_range(\x)))});
    \node[red, right, font=\Large] at (2., 0.6) {$\psi_{t{+}1,n}(s)$};
\end{scope}

\begin{scope}[yshift=-2cm, xshift=-0.5cm]
    \draw[->] (-2, 0) -- (2, 0) node[right, font=\large] {$z$};
    \node[below, font=\large] at (1.5, -0.15) {$1$};
    \draw (1.5, -0.15) -- (1.5, 0.15);
    \node[below, font=\large] at (-1.5, -0.15) {$0$};
    \draw (-1.5, -0.15) -- (-1.5, 0.15);
    \tikzset{
    }
    \draw[black, thick, domain=-1.5:1.5, samples=401] 
        plot (\x, {L_0(scale_range(\x)))});
    \draw[black, thick, domain=-1.5:1.5, samples=401] 
        plot (\x, {L_1(scale_range(\x)))});
    \draw[black, thick, domain=-1.5:1.5, samples=401] 
        plot (\x, {L_2(scale_range(\x)))});
    \draw[black, thick, domain=-1.5:1.5, samples=401] 
        plot (\x, {L_3(scale_range(\x)))});
    \node[black, right, font=\Large] at (1.5, 0.6) {$\phi_n(z)$};
\end{scope}

\begin{scope}[yshift=-2cm, xshift=0cm]
    \draw[->, thick, black, dashed] (-2.5, 1.8) .. controls (-2.8, 1.) .. (-2.1, 0.3);
    \node[align=center, font=\large] at (-3.6, 0.8) {$z=\sigma_t(s)$};
    \draw[<-, red, thick] (-2., 1.8) .. controls (-2.5, 1.) .. (-2., 0.4);
    \node[align=center, red, font=\large] at (-1., 1.4) {$s=\sigma^{-1}_{t{+}1}(z)$};
\end{scope}
\end{tikzpicture}
    \caption{Illustration of the domain-expansion process realized by the lag operator \(\sigma_t \circ \sigma_{t+1}^{-1}\). 
    \textbf{Top:} The prior approximation \(\hat{u}_t(s)\) (black) on \(T_t = (-\infty, t]\) is extended by the new input sample \(u_{t+1}\) (black dot) to form the interim signal \(\tilde{u}_{t+1}(s)\) on \(T_{t+1} = (-\infty, t+1]\). 
    \textbf{Middle:} The old warped basis functions \(\psi_{t,n}(s)\) (dashed black) are re-warped into the new basis \(\psi_{t+1,n}(s)\) (solid red) to accommodate the expanded domain. 
    \textbf{Bottom:} All basis functions are images of the \emph{same} canonical orthonormal basis \(\phi_n(z)\) on \(Z = [0,1]\) under the respective warping maps \(\sigma_t\) and \(\sigma_{t+1}\). 
    The arrows highlight the geometric action of the lag operator, which is exactly the composition that yields the state-transition matrix \(A_t\) via a single inner product (eq.~\eqref{eq:define-a-as-lag}).}
    \label{fig:domain-expansion}
\end{figure}

This visualization illustrates the geometric action of the lag operator, specifically the domain expansion and compression of the basis functions over a time step. The state transition matrix $\bm{A}_\Delta$ is simply the algebraic projection of this geometric shift, demonstrating how the system's dynamics are constructed directly from time-warping principles rather than continuous-time ODEs.
\section{DETAILS ON THE FREQUENCY-PRESERVING LAG-SSM}
\label{app:frequency-preserving-details}

As briefly sketched in Section~\ref{sec:frequency-preserving-lag-ssm-short}, a major limitation of the standard exponential warp is that it stretches signals in time. A basis function that oscillates at a fixed frequency in the canonical domain $Z$ becomes a chirp (variable frequency) in the time domain $s$. This makes it inefficient for signals with sustained oscillatory components. 

Our framework addresses this by designing a time-warp $\sigma_t$ that preserves frequency in the $s$-domain while still providing infinite decaying memory. Figure~\ref{fig:s-z-map-periodic} visualizes the piecewise-linear frequency-preserving time warp of a fixed-frequency oscillatory basis function in the time domain.

\subsection{Construction of the Piecewise-Linear Warp}
The Frequency-Preserving Lag-SSM is built as a \emph{hybrid} model. It combines two families of bases and associated warps: one subset designed to preserve frequency of oscillation in the $s$-domain ($\in T_t$), and another subset (e.g., Legendre polynomials with the standard exponential warp) to capture transients and decaying memory. We elaborate on this construction below.

\subsection{Geometric Consequence: Unwarped Fourier Basis}
\label{app:frequency-preserving-derivation}
The motivation is that even when the canonical basis $\Phi$ on $Z$ contains oscillatory components, the standard exponential warp stretches them into chirps (variable frequency) in the $s$-domain. The piecewise-linear warp introduced below resolves this issue while preserving both orthogonality and infinite decaying memory, as well as the periodicity of the basis functions in the $s$-domain.

Recall the two domains: the physical time domain $s \in T_t$ and the canonical domain $z \in Z$. The inner product on $Z$ uses the Lebesgue measure $d\mu(z) = dz$:
\begin{equation}
    \langle U, V \rangle = \int_Z U(z) V(z) \, d\mu(z).
\end{equation}

First, for the $s$-domain measure $\omega_t$ on $(-\infty, t] = T_t$, we consider a step function with infinitely many (decaying) $n$ steps ($n=1, 2, ...$) of length $L$ and the values are constant across the interval $(t-nL,\, t-(n-1)L]$ of height $h_n$ where $h_n>0$ and $\sum\limits^{\infty}_{n=1}h_n=1$. For example, $h_n = C\,e^{-n}$ or $\frac{C}{n^2}$ ($C$ is for normalization).

Then we define the time-warp $\sigma_t(s)$ by dividing $Z$ into intervals of length $h_n$.

More precisely, for any $s$ in the interval $(t-nL, t-(n-1)L]$, we map it to the $Z$'s divided interval $H_n$, such that
\begin{equation}
H_n = \left(1-\sum\limits^n_{i=1}h_i, \,1-\sum\limits^{n-1}_{i=1}h_i\right).
\end{equation}

By decomposing $s$ to see its lower bound and mid-point via the length and ratio,
\begin{equation}
s = (t-nL) +  \left(\frac{s-t+nL}{L}\right) \, L.
\end{equation}
We can get the $z$ in the corresponding interval $H_n$,
\begin{equation}
\sigma_t = z = \left(1-\sum\limits^n_{i=1}h_i\right) + \left(\frac{s-t+nL}{L}\right) \, h_n .
\end{equation}

Now $\sigma_t$ is a decaying piecewise-linear time-warp yielding a constant measure $\omega_t(s) = |\sigma_t'(s)| = h_n$ over the interval $H_n$. This means that if we define a Fourier basis in the $s$-domain with period $L$, i.e.\ $\{\sin(2\pi k/L), \cos(2\pi k/L); k=0,1,2,\dots\}$, the waveform is piecewise preserved in the $z$-domain within each interval $H_n$, ensuring that orthogonality is maintained in the $z$-space (Figure~\ref{fig:s-z-map-periodic}).

\begin{figure}[h!]
    \centering
    \begin{tikzpicture}[
    font=\sffamily,
    >=Stealth, 
    declare function={
          sin_wave(\x) = 0.5*sin(4*\x r)+1;
          stepwise_decay(\x) = 
            ifthenelse(\x > -pi/2 && \x <= 0, 1,
            ifthenelse(\x > -2*pi/2 && \x <= -pi/2, 1/2,
            ifthenelse(\x > -3*pi/2 && \x <= -2*pi/2, 1/4, 
            ifthenelse(\x >= -4*pi/2 && \x <= -3*pi/2, 1/8, 0))));
        weight_constant(\x) = 1;
    }
]

\begin{scope}[yshift=3cm, xshift=2.3cm]
    \draw[black, thick, domain=-2*pi:0, samples=401] 
        plot (\x, {sin_wave(\x)+1});
        
    \draw[ForestGreen, thick, domain=-2*pi:0, samples=401] 
        plot (\x, {stepwise_decay(\x)});

    \draw[ForestGreen, dashed] (-1/2*pi, 1) -- (-1/2*pi, 0);
    \draw[ForestGreen, dashed] (-pi, 1) -- (-pi, 0);
    \draw[ForestGreen, dashed] (-3/2*pi, 1) -- (-3/2*pi, 0);
        
    \node[black, right, font=\Large] at (0.2, 2.0) {$\psi(s)$};
    \node[ForestGreen, right, font=\Large] at (0.2, 1.0) {$\omega_t(s)$};
    
    \draw[->] (-2*pi, 0) -- (1.5, 0) node[right, font=\large] {$s$};
    \node[anchor=west, font=\large] at (-2*pi-1.5, 0) {$-\infty$};
    \node[below, font=\large] at (0, -0.15) {$t$};
    \draw (0, -0.15) -- (0, 0.15);
\end{scope}

\begin{scope}[yshift=-1cm, xshift=2.3cm]
    \draw[->] (-5.0, 0) -- (1.5, 0) node[right, font=\large] {$z$};
    
    \node[below, font=\large] at (0, -0.15) {$1$};
    \draw (0, -0.15) -- (0, 0.15);
    \node[below, font=\large] at (-4.6, -0.15) {$0$};
    \draw (-4.6, -0.15) -- (-4.6, 0.15);

    \draw[black, thick, domain=-2.4:0, samples=100] 
        plot (\x, {0.5*sin(360 * \x / 2.4) + 1});
        
    \draw[black, thick, domain=-3.6:-2.4, samples=100] 
        plot (\x, {0.5*sin(360 * (\x - (-2.4)) / 1.2) + 1});
        
    \draw[black, thick, domain=-4.2:-3.6, samples=100] 
        plot (\x, {0.5*sin(360 * (\x - (-3.6)) / 0.6) + 1});
        
    \draw[black, thick, domain=-4.5:-4.2, samples=100] 
        plot (\x, {0.5*sin(360 * (\x - (-4.2)) / 0.3) + 1});
        

    \draw[ForestGreen, thick, domain=-4.6:0, samples=20] 
        plot (\x, {weight_constant(\x)});

    \draw[ForestGreen, dashed] (-2.4, 1) -- (-2.4, 0);
    \draw[ForestGreen, dashed] (-3.6, 1) -- (-3.6, 0);
    \draw[ForestGreen, dashed] (-4.2, 1) -- (-4.2, 0);

    \node[black, right, font=\Large] at (0.2, 0.6) {$\psi \circ \sigma_t^{-1}(z)$};
    \node[ForestGreen, right, font=\Large] at (0.2, 1.3) {$d\mu(z) = dz$};
\end{scope}


\coordinate (map_start_s) at (0.5, 2.7);
\coordinate (map_end_z)   at (0.5, 0.3);
\coordinate (map_start_z) at (-0.5, 0.3);
\coordinate (map_end_s)   at (-0.5, 2.7);

\node[align=center, font=\Large] at (1.6, 1.5) {$\sigma_t(s)$};
\node[align=center, font=\Large] at (-1.8, 1.5) {$\sigma_t^{-1}(z)$};

\draw[-{Stealth[length=3mm, width=2mm]}] 
    (map_start_s) .. controls (1, 2.0) and (1, 1.0) .. (map_end_z);
    
\draw[-{Stealth[length=3mm, width=2mm]}] 
    (map_start_z) .. controls (-1, 1.0) and (-1.0, 2.0) .. (map_end_s);

\begin{scope}[gray, dashed, -{Stealth[length=2mm]}]
    \draw (2.3, 2.3) -- (2.3, -0.8);
    
    
    \draw (-4.8, 2.6) .. controls (-4, 1) .. (-2.3, -0.8);
\end{scope}

\end{tikzpicture}
   \caption{Illustration of the piecewise-linear time warping for the Frequency-Preserving Lag-SSM (Sec.~\ref{sec:frequency-preserving-lag-ssm-short}). The infinite history interval \(T_t = (-\infty, t]\) (top, \(s\)-axis) is partitioned into fixed-length windows of size \(L\), each weighted by a decaying constant \(\omega_n\) (green step function \(\omega_t(s) = \omega_n/L\) on the \(n\)-th window). The invertible map \(\sigma_t\) linearly compresses each window into a segment of the canonical interval \(Z = [0,1]\) whose length is proportional to \(\omega_n\). Consequently, the induced measure on \(Z\) remains uniform (\(d\mu(z) = dz\), green line), so a Fourier basis defined on \(Z\) maps back to pure, unwarped sinusoids in the original time domain (visible in the warped signal \(u \circ \sigma_t^{-1}(z)\)). This yields an SSM that implements a sliding-window Fourier transform with infinite decaying memory, derived directly via the lag operator without any ODE.}
    \label{fig:s-z-map-periodic}
\end{figure}

\subsection{Extension to Hybrid Model: Transient Decaying Memory}
To capture the remaining degrees of freedom (e.g., smooth transients and decaying memory), we can augment the Fourier basis with a Legendre subset under the standard exponential warp. The full hybrid basis is then orthogonalized with respect to the oscillatory basis via the Gram--Schmidt process.

\end{document}